\def\eqref#1{equation~\ref{#1}}
\def\1{\bm{1}}
\def\eps{{\epsilon}}
\def\vtheta{{\bm{\theta}}}
\newcommand{\ve}{\@ifnextchar\bgroup{\velong}{{\bm{e}}}}
\newcommand{\velong}[1]{{\bm{#1}}}
\def\vg{{\bm{g}}}
\def\vu{{\bm{u}}}
\def\vv{{\bm{v}}}
\def\vw{{\bm{w}}}
\def\vx{{\bm{x}}}
\def\vz{{\bm{z}}}
\def\evg{{g}}
\def\evx{{x}}
\def\evz{{z}}
\def\mI{{\bm{I}}}
\def\mS{{\bm{S}}}
\DeclareMathAlphabet{\mathsfit}{\encodingdefault}{\sfdefault}{m}{sl}
\SetMathAlphabet{\mathsfit}{bold}{\encodingdefault}{\sfdefault}{bx}{n}
\newcommand{\E}{\mathbb{E}}
\newcommand{\R}{\mathbb{R}}
\newcommand{\Var}{\mathrm{Var}}
\theoremstyle{plain}
\newtheorem{theorem}{Theorem}[section]
\newtheorem{lemma}[theorem]{Lemma}
\newtheorem{remark}[theorem]{Remark}
\theoremstyle{definition}
\newtheorem{definition}[theorem]{Definition}
\title{Theoretical Analysis of Auto Rate-tuning\\ by Batch Normalization}
\author{Sanjeev Arora\\
Princeton University and Institute for Advanced Study\\
\texttt{arora@cs.princeton.edu} \\
\And
Zhiyuan Li \\
Princeton University \\
\texttt{zhiyuanli@cs.princeton.edu}
\And
Kaifeng Lyu \thanks{Work
done while visiting Princeton University.} \\
Tsinghus University \\
\texttt{lkf15@mails.tsinghua.edu.cn}
}
\newcommand{\polylog}{\mathop{\mathrm{polylog}}}
\newcommand{\BN}{\mathop{\mathrm{BN}}}
\newcommand{\WNGrad}{\mathtt{WNGrad}}
\newcommand{\etaw}{\eta_{\mathrm{w}}}
\newcommand{\etawt}[1]{\eta_{\mathrm{w},#1}}
\newcommand{\etag}{\eta_{\mathrm{g}}}
\newcommand{\etagt}[1]{\eta_{\mathrm{g},#1}}
\newcommand{\Lvv}[1]{L^{\mathrm{vv}}_{#1}}
\newcommand{\Lvg}[1]{L^{\mathrm{vg}}_{#1}}
\newcommand{\Lgg}{L^{\mathrm{gg}}}
\newcommand{\cg}{c_{\mathrm{g}}}
\newcommand{\Data}{\mathcal{D}}
\newcommand{\IntDomain}{\mathcal{U}}
\newcommand{\Loss}{\mathcal{L}}
\newcommand{\LossF}{\mathcal{F}}
\newcommand{\Filt}{\mathscr{F}}
\newcommand{\Gg}{G_{\mathrm{g}}}
\newcommand{\wui}[1]{\vw^{(#1)}}
\newcommand{\evwui}[1]{w^{(#1)}}
\newcommand{\vui}[1]{\vv^{(#1)}}
\newcommand{\xui}[1]{\vx^{(#1)}}
\newcommand{\Gui}[1]{G^{(#1)}}
\newcommand{\normtwo}[1]{\left\| #1 \right\|_2}
\newcommand{\relu}{\mathop{\mathrm{ReLU}}}
\begin{document}

\maketitle

\begin{abstract}
Batch Normalization (BN) has become a cornerstone of deep learning across diverse architectures, appearing to help optimization as well as generalization. While the idea makes intuitive sense, theoretical analysis of its effectiveness has been lacking. Here theoretical support is provided for one of its conjectured properties, namely, the ability to allow gradient descent to succeed with less tuning of learning rates. It is shown that even if we fix the learning rate of scale-invariant parameters (e.g., weights of each layer with BN) to a constant (say, $0.3$), gradient descent still approaches  a stationary point (i.e., a solution where gradient is zero) in the rate of $T^{-1/2}$ in $T$ iterations, asymptotically matching the best bound for gradient descent with well-tuned learning rates. A similar result with convergence rate $T^{-1/4}$ is also shown for stochastic gradient descent.

\end{abstract}

\section{Introduction}

Batch Normalization (abbreviated as BatchNorm or BN)~\citep{ioffe2015batch} is one of the most important innovation in deep learning, widely used in modern neural network architectures such as ResNet~\citep{he2016deep}, Inception~\citep{szegedy2017inception}, and DenseNet~\citep{huang2017densely}. It also inspired a series of other normalization methods~\citep{ulyanov2016IN, ba2016layer, ioffe2017renorm, wu2018group}.

BatchNorm consists of standardizing the output of each layer to have zero mean and unit variance. For a single neuron, if $x_1, \dots, x_B$ is the original outputs in a mini-batch, then it adds a BatchNorm layer which modifies the outputs to
\begin{equation} \label{eq:bn-intro}
\BN(x_i) = \gamma \frac{x_i - \mu}{\sigma} + \beta,
\end{equation}
where $\mu = \frac{1}{B} \sum_{i=1}^{B} x_i$ and $\sigma^2 = \frac{1}{B} \sum_{i=1}^{B} (x_i - \mu)^2$ are the mean and variance within the mini-batch, and $\gamma, \beta$ are two learnable parameters. BN appears to stabilize and speed up training, and improve generalization. The inventors suggested~\citep{ioffe2015batch} that these benefits derive from the following:
\begin{enumerate}
    \item By stabilizing layer outputs it reduces a phenomenon called \textit{Internal Covariate Shift},
    whereby the
    training of a higher layer is continuously undermined or undone by changes in the distribution of its inputs due to 
    parameter changes in previous layers.,
    \item Making the weights invariant to scaling, appears to reduce the dependence of training on the scale of parameters
    and enables us to use a higher learning rate;
    \item By implictly regularizing the model it improves generalization.
\end{enumerate}
But these three benefits are not fully  understood in theory. Understanding generalization for deep models remains an open problem (with or without BN).  Furthermore, in demonstration that intuition can sometimes mislead, recent experimental results suggest that BN does not reduce internal covariate shift either~\citep{santurkar2018does}, and the authors of that study suggest that the true explanation for BN's effectiveness may lie in  a smoothening effect (i.e., lowering of the Hessian norm) on the objective. Another recent paper~\citep{kohler2018towards} tries to quantify the benefits of BN for simple machine learning problems such as regression but does not analyze deep models.

\paragraph{Provable quantification of Effect 2 (learning rates).} Our study consists of quantifying the effect of BN on learning rates. \citet{ioffe2015batch} observed that without BatchNorm, a large learning rate leads to a rapid growth of the parameter scale. Introducing BatchNorm usually stabilizes the growth of weights and appears to implicitly tune the learning rate so that the effective learning rate adapts during the course of the algorithm. They explained this intuitively as follows.  After BN the output of a neuron $z = \BN(\ve{w}^\top \ve{x})$ is unaffected when the weight $\ve{w}$ is scaled, i.e., for any scalar $c > 0$,
\[
\mathrm{BN}(\ve{w}^\top \ve{x}) = \mathrm{BN}((c\ve{w})^\top \ve{x}).
\]
Taking derivatives one finds that the gradient at $c\ve{w}$ equals to the gradient at $\ve{w}$ multiplied by a factor $1/c$. Thus, even though the scale of weight parameters of a linear layer proceeding a BatchNorm no longer means anything to the function represented by the neural network, their growth has an effect of reducing the learning rate. 

Our paper considers the following question: {\em Can we rigorously capture the above intuitive behavior?} Theoretical analyses of speed of gradient descent algorithms in nonconvex settings~\cite{} study the number of iterations required for convergence to a stationary point (i.e., where gradient vanishes). But they need to assume that the learning rate has been set (magically) to a small enough number determined by the
smoothness constant of the loss function --- which in practice are of course unknown. With this tuned learning rate, the norm of the gradient reduces asymptotically as $T^{-1/2}$ in $T$ iterations. In case of stochastic gradient descent, the reduction is like $T^{-1/4}$.
Thus a potential way to quantify the rate-tuning behavior of BN would be to show that even when the learning rate  is fixed to a suitable constant, say $0.1$, from the start, after introducing BN the convergence to stationary point is  asymptotically just as fast (essentially) as it would be with a hand-tuned learning rate required by earlier analyses. The current paper rigorously establishes such  {\em auto-tuning} behavior of BN (See below for an important clarification about scale-invariance).

We note that a recent paper~\citep{wu2018wngrad} introduced a new algorithm WNgrad that is motivated by BN and provably has the above auto-tuning behavior as well. That paper did not establish such behavior for BN itself, but it was a clear inspiration for our analysis of BN. 

\paragraph{Scale-invariant and scale-variant parameters.} The intuition of~\cite{ioffe2015batch} applies for all scale-invariant parameters, but the actual algorithm also involves other parameters such as $\gamma$ and $\beta$ whose scale does matter. 
Our analysis  partitions the parameters in the neural networks into two groups $W$ (\textit{scale-invariant}) and $\vg$ (\textit{scale-variant}).
The first group, $W = \{\wui{1}, \dots, \wui{m}\}$, consists of all the parameters whose scales does not affect the loss, i.e., scaling $\wui{i}$ to $c\wui{i}$ for any $c > 0$ does not change the loss (see Definition \ref{def:invariant} for a formal definition); the second group, $\vg$, consists of all other parameters that are not scale-invariant. In a feedforward neural network with BN added at each layer, the layer weights are all scale-invariant. This is also true for BN with $\ell_p$ normalization strategies \citep{santurkar2018does,hoffer2018norm} and 
other normalization layers, such as Weight Normalization~\citep{salimans2016weight}, Layer Normalization~\citep{ba2016layer}, Group Normalization~\citep{wu2018group}
(see Table 1 in \citet{ba2016layer} for a summary).

\subsection{Our contributions}

In this paper, we show that the scale-invariant parameters do not require rate tuning for lowering the training loss.
To illustrate this, we consider the case in which we set learning rates separately for scale-invariant parameters $W$ and scale-variant parameters $\vg$.
Under some assumptions on the smoothness of the loss and the boundedness of the noise, we show that
\begin{enumerate}
\item In full-batch gradient descent, if the learning rate for $\vg$ is set optimally, then no matter 
how the learning rates for $W$ is set, $(W; \vg)$ converges to a first-order stationary point in the rate $O(T^{-1/2})$, which asymptotically matches with the convergence rate of gradient descent with optimal choice of learning rates for all parameters (Theorem \ref{thm:main-theory});
\item In stochastic gradient descent, if the learning rate for $\vg$ is set optimally, then no matter 
how the learning rate for $W$ is set, $(W; \vg)$ converges to a first-order stationary point in the rate $O(T^{-1/4} \polylog (T))$, which asymptotically matches with the convergence rate of gradient descent with optimal choice of learning rates for all parameters (up to a $\polylog(T)$ factor) (Theorem \ref{thm:stochastic-main}).
\end{enumerate}

In the usual case where we set a unified learning rate for all parameters, our results imply that we only need to set a learning rate that is suitable for $\vg$. This means introducing scale-invariance into neural networks potentially reduces the efforts to tune learning rates, since there are less number of parameters we need to concern in order to guarantee an asymptotically fastest convergence.

In our study, the loss function is assumed to be smooth. However, BN introduces non-smoothness in extreme cases due to division by zero when the input variance is zero (see \eqref{eq:bn-intro}). Note that the suggested implementation of BN by~\cite{ioffe2015batch} uses a smoothening constant in the whitening step, but it does not preserve scale-invariance.
In order to avoid this issue, we describe a simple modification of the smoothening that maintains scale-invariance.
Also, our result cannot be applied to neural networks with ReLU, but it is applicable for its smooth approximation softplus~\citep{dugas2001softplus}.

We include some experiments in Appendix~\ref{sec:appendix-exper}, showing that it is indeed the auto-tuning behavior we analysed in this paper empowers BN to have such convergence with arbitrary learning rate for scale-invariant parameters. In the generalization aspect, a tuned learning rate is still needed for the best test accuracy, and we showed in the experiments that the auto-tuning behavior of BN also leads to a wider range of suitable learning rate for good generalization.


\subsection{Related works}

\textbf{Previous work for understanding Batch Normalization.} Only a few recent works tried to theoretically understand BatchNorm. \citet{santurkar2018does} was described earlier. \citet{kohler2018towards} aims to find theoretical setting such that training neural networks with BatchNorm is faster than without BatchNorm. In particular, the authors analyzed three types of shallow neural networks, but rather than consider gradient descent, the authors designed task-specific training methods when discussing neural networks with BatchNorm. \cite{bjorck2018understanding}  observes that the higher learning rates enabled by BatchNorm improves generalization.

\textbf{Convergence of adaptive algorithms.} Our analysis is inspired by the proof for WNGrad \citep{wu2018wngrad}, where the author analyzed an adaptive algorithm, WNGrad, motivated by Weight Normalization \citep{salimans2016weight}. Other works analyzing the convergence of adaptive methods are 
\citep{ward2018adagrad,li2018convergence,zou2018convergence,zhou2018on}.

\textbf{Invariance by Batch Normalization.} \citet{cho2017riemannian} proposed to run riemmanian gradient descent on Grassmann manifold $\mathcal{G}(1,n)$ since the weight matrix is scaling invariant to the loss function. \citet{hoffer2018norm} observed that the effective stepsize is proportional to $\frac{\etaw}{\|\ve{w}_t\|^2}$.

\section{General framework} \label{sec:setting}
In this section, we introduce our general framework in order to study the benefits of scale-invariance.

\subsection{Motivating examples of neural networks} \label{subsec:motivating}
Scale-invariance is common in neural networks with BatchNorm. We formally state the definition of scale-invariance below:
\begin{definition}(Scale-invariance) \label{def:invariant}
Let $\LossF(\vw, \vtheta')$ be a loss function. We say that $\vw$ is a scale-invariant parameter of $\LossF$ if for all $c > 0$, $\LossF(\vw, \vtheta') = \LossF(c\vw, \vtheta')$; if $\vw$ is not scale-invariant, then we say $\vw$ is a scale-variant parameter of $\LossF$.
\end{definition}

We consider the following $L$-layer ``fully-batch-normalized'' feedforward  network $\Phi$ for illustration:
\begin{equation} \label{eq:phi-structure}
\Loss(\vtheta) = \E_{Z \sim \Data^B}\left[\frac{1}{B} \sum_{b=1}^{B}f_{y_b}(\BN(W^{(L)} \sigma(\BN( W^{(L-1)} \cdots \sigma(\BN(W^{(1)} \vx_b))  )) ))\right].
\end{equation}
$Z = \{(\vx_1, y_1), \dots, (\vx_B, y_B)\}$ is a mini-batch of $B$ pairs of input data and ground-truth label from a data set $\Data$. $f_y$ is an objective function depending on the label, e.g., $f_y$ could be a cross-entropy loss in classification tasks. $W^{(1)}, \dots, W^{(L)}$ are weight matrices of each layer. $\sigma: \R \to \R$ is a nonlinear activation function which processes its input elementwise (such as ReLU, sigmoid).
Given a batch of inputs $\vz_1, \dots, \vz_B \in \R^m$, $\BN(\vz_b)$ outputs a vector $\tilde{\vz}_b$ defined as
\begin{equation} \label{eq:bn-no-eps}
    \tilde{\evz}_{b,k} := \gamma_k \frac{\evz_{b,k} - \mu_k}{\sigma_k} + \beta_k,
\end{equation}
where $\mu_k = \E_{b\in[B]}[\evz_{b,k}]$ and $\sigma_k^2 = \E_{b\in[B]}[(\evz_{b,k} - \mu_k)^2]$ are the mean and variance of $\vz_b$, $\gamma_k$ and $\beta_k$ are two learnable parameters which rescale and offset the normalized outputs to retain the representation power. The neural network $\Phi$ is thus parameterized by weight matrices $W^{(i)}$ in each layer and learnable parameters $\gamma_k, \beta_k$ in each BN.

BN has the property that the output is unchanged when the batch inputs $\evz_{1,k}, \dots, \evz_{B,k}$ are scaled or shifted simultaneously. For $\evz_{b,k} = \vw_k^\top \hat{\vx}_b$ being the output of a linear layer, it is easy to see that $\vw_k$ is scale-invariant, and thus each row vector of weight matrices $W^{(1)}, \dots, W^{(L)}$ in $\Phi$ are scale-invariant parameters of $\Loss(\vtheta)$. In convolutional neural networks with BatchNorm, a similar argument can be done. In particular, each filter of convolutional layer normalized by BN is scale-invariant.

With a general nonlinear activation, other parameters in $\Phi$, the scale and shift parameters $\gamma_k$ and $\beta_k$ in each BN, are scale-variant.
When ReLU or Leaky ReLU~\citep{maas2013leakyrelu} are used as the activation $\sigma$, the vector $(\gamma_1, \dots, \gamma_m, \beta_1, \dots, \beta_m)$ of each BN at layer $1 \le i < L$ (except the last one) is indeed scale-invariant. This can be deduced by using the the (positive) homogeneity of these two types of activations and noticing that the output of internal activations is processed by a BN in the next layer.
Nevertheless, we are not able to analyse either ReLU or Leaky ReLU activations because we need the loss to be smooth in our analysis. We can instead analyse smooth activations, such as sigmoid, tanh, softplus \citep{dugas2001softplus}, etc.

\subsection{Framework} \label{subsec:framework}

Now we introduce our general framework. Let $\Phi$ be a neural network parameterized by $\vtheta$. Let $\Data$ be a dataset, where each data point $\vz \sim \Data$ is associated with a loss function $\LossF_{\vz}(\vtheta)$ ($\Data$ can be the set of all possible mini-batches). We partition the parameters $\vtheta$ into $(W; \vg)$, where $W = \{\wui{1}, \dots, \wui{m}\}$ consisting of parameters that are scale-invariant to all $\LossF_{\vz}$, and $\vg$ contains the remaining parameters. The goal of training the neural network is to minimize the expected loss over the dataset: $\Loss(W; \vg) := \E_{\vz \sim \Data}[ \LossF_{\vz}(W; \vg) ]$.
In order to illustrate the optimization benefits of scale-invariance, we consider the process of training this neural network by stochastic gradient descent with separate learning rates for $W$ and $\vg$:
\begin{align}
    \wui{i}_{t+1} \gets \wui{i}_t - \etawt{t} \nabla_{\wui{i}_t} \LossF_{\vz_t}(\vtheta_t), \qquad \vg_{t+1} \gets \vg_t - \etagt{t} \nabla_{\vg_t} \LossF_{\vz_t}(\vtheta_t).
\end{align}

\subsection{The intrinsic optimization problem}
Thanks to the scale-invariant properties, the scale of each weight $\wui{i}$ does not affect loss values.
However, the scale does affect the gradients. Let $V = \{\vui{1}, \dots, \vui{m}\}$ be the set of normalized weights, where $\vui{i} = \wui{i} / \|\wui{i}\|_2$. The following simple lemma can be easily shown:
\begin{lemma}[Implied by \cite{ioffe2015batch}] \label{lam:grad-w-and-v}
For any $W$ and $\vg$,
\begin{equation}
\nabla_{\ve{w}^{(i)}} \LossF_{\vz}(W; \ve{g}) = \frac{1}{\|\ve{w}^{(i)}\|_2} \nabla_{\ve{v}^{(i)}} \LossF_{\vz}(V; \ve{g}), \qquad \nabla_{\ve{g}} \LossF_{\vz}(W; \ve{g}) = \nabla_{\ve{g}} \LossF_{\vz}(V; \ve{g}).
\end{equation}
\end{lemma}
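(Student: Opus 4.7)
The plan is to exploit scale-invariance in two complementary ways: once as a pointwise identity on values (to justify writing $\LossF_{\vz}(W;\vg) = \LossF_{\vz}(V;\vg)$), and once in infinitesimal form (to kill the radial component of $\nabla_{\vui{i}} \LossF_{\vz}$). Since $\wui{i}$ is scale-invariant, $\LossF_{\vz}(W;\vg)$ depends on $\wui{i}$ only through its direction $\vui{i} = \wui{i}/\|\wui{i}\|_2$. So the first step is simply to write the loss as $\LossF_{\vz}(V;\vg)$ and then apply the chain rule.

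Next I would compute the Jacobian of the normalization map $\vu : \vw \mapsto \vw/\|\vw\|_2$. A direct calculation gives $\partial u_i/\partial w_j = \frac{1}{\|\vw\|_2}(\delta_{ij} - u_i u_j)$, i.e., the Jacobian is $\frac{1}{\|\vw\|_2}(I - \vu\vu^\top)$, which is $1/\|\vw\|_2$ times the orthogonal projection onto the tangent space of the unit sphere at $\vu$. Applying the chain rule to $\LossF_{\vz}(W;\vg) = \LossF_{\vz}(V;\vg)$ then yields
\[
\nabla_{\wui{i}} \LossF_{\vz}(W;\vg) = \frac{1}{\|\wui{i}\|_2}\bigl(I - \vui{i}(\vui{i})^\top\bigr)\,\nabla_{\vui{i}} \LossF_{\vz}(V;\vg).
\]

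The remaining ingredient is to show that the radial component $(\vui{i})^\top \nabla_{\vui{i}} \LossF_{\vz}(V;\vg)$ vanishes, which is where scale-invariance is used a second time. Differentiating the identity $\LossF_{\vz}(\ldots, c\vui{i}, \ldots;\vg) = \LossF_{\vz}(\ldots, \vui{i}, \ldots;\vg)$ in $c$ at $c = 1$ gives exactly $(\vui{i})^\top \nabla_{\vui{i}} \LossF_{\vz}(V;\vg) = 0$ (this is Euler's identity for a $0$-homogeneous function). Plugging this into the display above cancels the $\vui{i}(\vui{i})^\top$ term and delivers the stated formula. The second identity $\nabla_{\vg} \LossF_{\vz}(W;\vg) = \nabla_{\vg} \LossF_{\vz}(V;\vg)$ is immediate from the same chain rule, since $V$ does not depend on $\vg$.

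There is no genuine obstacle here; the only subtlety is remembering that the naive chain rule produces the projector $(I - \vui{i}(\vui{i})^\top)$ on the right-hand side, and one has to invoke scale-invariance a second time (infinitesimally) to eliminate it. I would therefore present the proof in this order: (i) rewrite the loss using invariance; (ii) compute the Jacobian of the normalization; (iii) apply the chain rule; (iv) invoke Euler's identity to remove the projector; (v) observe that the $\vg$ identity is immediate.
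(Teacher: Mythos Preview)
Your proof is correct. The paper does not spell out a proof of this lemma (it marks it ``implied by \cite{ioffe2015batch}'' and says it ``can be easily shown''), but the argument it has in mind is the one-liner sketched in the introduction: differentiate the scale-invariance identity $\LossF_{\vz}(c\,\wui{i};\ldots)=\LossF_{\vz}(\wui{i};\ldots)$ in $\wui{i}$ (not in $c$) to obtain $c\,(\nabla\LossF_{\vz})(c\,\wui{i};\ldots)=(\nabla\LossF_{\vz})(\wui{i};\ldots)$, i.e., the gradient is $(-1)$-homogeneous in $\wui{i}$; specializing to $c=\|\wui{i}\|_2^{-1}$ gives the claim directly. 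Your route is genuinely different: you first rewrite the loss in the normalized variable and pull back through the Jacobian of $\vw\mapsto\vw/\|\vw\|_2$, which produces the tangent projector $I-\vui{i}(\vui{i})^\top$, and then you invoke Euler's identity (differentiating in $c$) a second time to kill the radial piece. Both arguments are valid; the homogeneity-of-the-gradient approach is shorter and never introduces the projector, while yours is slightly longer but has the side benefit of making the orthogonality $\langle \vui{i},\nabla_{\vui{i}}\LossF_{\vz}\rangle=0$ explicit along the way (which is exactly the content of Lemma~\ref{lam:w-growth}(1)).
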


To make $\|\nabla_{\ve{w}^{(i)}} \LossF_{\vz}(W; \ve{g})\|_2$ to be small, one can just scale the weights by a large factor. Thus there are ways to reduce the norm of the gradient that do not reduce the loss.


For this reason, we define the intrinsic optimization problem for training the neural network. Instead of optimizing $W$ and $\vg$ over all possible solutions, we focus on parameters $\vtheta$ in which $\|\wui{i}\|_2 = 1$ for all $\wui{i} \in W$. This does not change our objective, since the scale of $W$ does not affect the loss.
\begin{definition}[Intrinsic optimization problem]
Let $\IntDomain = \{\vtheta \mid \|\wui{i}\|_2 = 1 \text{ for all } i \}$ be the \textit{intrinsic domain}. The \textit{intrinsic optimization problem} is defined as optimizing the original problem in $\IntDomain$:
\begin{equation}
\min_{(W; \vg) \in \IntDomain} \Loss(W; \vg).
\end{equation}
For $\{\vtheta_t\}$ being a sequence of points for optimizing the original optimization problem, we can define $\{\tilde{\vtheta}_t\}$, where $\tilde{\vtheta}_t = (V_t; \vg_t)$, as a sequence of points optimizing the intrinsic optimization problem.
\end{definition}

In this paper, we aim to show that training neural network for the original optimization problem by gradient descent can be seen as training by adaptive methods for the intrinsic optimization problem, and it converges to a first-order stationary point in the intrinsic optimization problem with no need for tuning learning rates for $W$.

\subsection{Assumptions on the loss} \label{subsec:assumptions}

We assume $\LossF_{\vz}(W; \vg)$ is defined and twice continuously differentiable at any $\vtheta$ satisfying none of $\wui{i}$ is $0$. Also, we assume that the expected loss $\Loss(\vtheta)$ is lower-bounded by $\Loss_{\min}$.

Furthermore, for $V = \{\vui{1}, \dots, \vui{m}\}$, where $\vui{i} = \wui{i} / \|\wui{i}\|_2$, we assume that the following bounds on the smoothness:
\[
    \left\|\frac{\partial}{\partial \vui{i} \partial \vui{j}} \LossF_{\vz}(V; \vg)\right\|_2 \le \Lvv{ij}, \qquad
    \left\|\frac{\partial}{\partial \vui{i} \partial \vg} \LossF_{\vz}(V; \vg)\right\|_2 \le \Lvg{i},
    \qquad
    \left\|\nabla_{\vg}^2 \LossF_{\vz}(V; \vg)\right\|_2 \le \Lgg.
\]
In addition, we assume that the noise on the gradient of $\vg$ in SGD is upper bounded by $\Gg$:
\[
\E\left[\left\| \nabla_{\vg} \LossF_{\vz}(V; \vg) - \E_{\vz \sim \Data}\left[\nabla_{\vg} \LossF_{\vz}(V; \vg)\right] \right\|^2_2\right] \le \Gg^2.
\]

\textbf{Smoothed version of motivating neural networks.} Note that the neural network $\Phi$ illustrated in Section \ref{subsec:motivating} does not meet the conditions of the smooothness at all since the loss function could be non-smooth. We can make some mild modifications to the motivating example to smoothen it \footnote{Our results to this network are rather conceptual, since the smoothness upper bound can be as large as $M^{O(L)}$, where $L$ is the number of layers and $M$ is the maximum width of each layer.}:
\begin{enumerate}
    \item The activation could be non-smooth. A possible solution is to use smooth nonlinearities, e.g., sigmoid, tanh, softplus \citep{dugas2001softplus}, etc. Note that softplus can be seen as a smooth approximation of the most commonly used activation ReLU.
    \item The formula of BN shown in \eqref{eq:bn-no-eps} may suffer from the problem of division by zero.
    To avoid this, the inventors of BN, \citet{ioffe2015batch}, add a small smoothening parameter $\epsilon > 0$ to the denominator, i.e.,
    \begin{equation} \label{eq:bn-with-eps}
    \tilde{\evz}_{b,k} := \gamma_k \frac{\evz_{b,k} - \mu_k}{\sqrt{\sigma_k^2 + \eps}} + \beta_k,
    \end{equation}
    However, when $\evz_{b,k} = \vw_k^\top \hat{\vx}_b$, adding a constant $\epsilon$ directly breaks the scale-invariance of $\vw_k$. We can preserve the scale-invariance by making the smoothening term propositional to $\|\vw_k\|_2$, i.e., replacing $\eps$ with $\eps \|\vw_k\|_2$. By simple linear algebra and letting $\vu := \E_{b \in [B]}[\hat{\vx}_b], \mS := \Var_{b \in [B]}(\hat{\vx}_b)$, this smoothed version of BN can also be written as
    \begin{equation} \label{eq:bn-smooth}
    \tilde{\evz}_{b,k} := \gamma_k \frac{ \vw_k^\top (\hat{\vx}_b - \vu) }{\|\vw_k\|_{\mS + \epsilon \mI}} + \beta_k.
    \end{equation}
    Since the variance of inputs is usually large in practice, for small $\epsilon$, the effect of the smoothening term is negligible except in extreme cases.
\end{enumerate}

Using the above two modifications, the loss function is already smooth. However, the scale of scale-variant parameters may be unbounded during training, which could cause the smoothness unbounded. To avoid this issue, we can either project scale-variant parameters to a bounded set, or use weight decay for those parameters (see Appendix \ref{sec:appendix-motivating} for a proof for the latter solution).

\subsection{Key observation: the growth of weights}

The following lemma is our key observation. It establishes a connection between the scale-invariant property and the growth of weight scale, which further implies an automatic decay of learning rates:
\begin{lemma} \label{lam:w-growth}
For any scale-invariant weight $\wui{i}$ in the network $\Phi$, we have:
\begin{enumerate}
    \item $\wui{i}_t$ and $\nabla_{\wui{i}_t}\LossF_{\vz_t}(\vtheta_t)$ are always perpendicular;\\
    \item $\|\wui{i}_{t+1}\|^2_2 \ = \ \|\wui{i}_t\|^2_2 + \etawt{t}^2 \|\nabla_{\wui{i}_t} \LossF_{\vz_t}(\vtheta_t)\|_2^2 \ =\  \|\wui{i}_t\|^2_2 + \frac{\etawt{t}^2}{\|\wui{i}_t\|^2_2} \|\nabla_{\vui{i}_t} \LossF_{\vz_t}(\tilde{\vtheta}_t)\|_2^2$.
\end{enumerate}
\end{lemma}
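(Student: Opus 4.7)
The plan is to handle the two parts of the lemma in sequence, with Part 1 supplying the key orthogonality fact that makes Part 2 drop out of the update rule by the Pythagorean identity.

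For Part 1, I would invoke scale-invariance directly. Since $\wui{i}$ is scale-invariant for $\LossF_{\vz}$ (Definition \ref{def:invariant}), we have $\LossF_{\vz_t}(\ldots, c\wui{i}_t, \ldots) = \LossF_{\vz_t}(\ldots, \wui{i}_t, \ldots)$ for every $c>0$. Differentiating both sides in $c$ at $c=1$ via the chain rule gives
\[
0 = \left.\frac{d}{dc}\right|_{c=1}\LossF_{\vz_t}(\ldots, c\wui{i}_t, \ldots) = \dotp{\nabla_{\wui{i}_t}\LossF_{\vz_t}(\vtheta_t)}{\wui{i}_t},
\]
which is exactly the stated orthogonality. (This is well defined because $\wui{i}_t \ne 0$ by the smoothness assumption on the domain, and differentiability at $c=1$ is guaranteed by the twice-differentiability assumption.)

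For Part 2, I would plug the update rule $\wui{i}_{t+1} = \wui{i}_t - \etawt{t}\nabla_{\wui{i}_t}\LossF_{\vz_t}(\vtheta_t)$ into $\|\cdot\|_2^2$ and expand:
\[
\|\wui{i}_{t+1}\|_2^2 = \|\wui{i}_t\|_2^2 - 2\etawt{t}\dotp{\wui{i}_t}{\nabla_{\wui{i}_t}\LossF_{\vz_t}(\vtheta_t)} + \etawt{t}^2\|\nabla_{\wui{i}_t}\LossF_{\vz_t}(\vtheta_t)\|_2^2.
\]
By Part 1 the cross term vanishes, giving the first equality. For the second equality I would apply Lemma \ref{lam:grad-w-and-v}, which says $\nabla_{\wui{i}}\LossF_{\vz}(W;\vg) = \frac{1}{\|\wui{i}\|_2}\nabla_{\vui{i}}\LossF_{\vz}(V;\vg)$; squaring norms yields $\|\nabla_{\wui{i}_t}\LossF_{\vz_t}(\vtheta_t)\|_2^2 = \frac{1}{\|\wui{i}_t\|_2^2}\|\nabla_{\vui{i}_t}\LossF_{\vz_t}(\tilde{\vtheta}_t)\|_2^2$, which after substitution produces the third expression.

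There is essentially no main obstacle here; the lemma is a clean consequence of the definition of scale-invariance combined with the SGD update rule and the previously stated Lemma \ref{lam:grad-w-and-v}. The only subtlety worth noting in the write-up is that invoking the derivative identity in Part 1 requires $\wui{i}_t \ne 0$, which is guaranteed by the standing assumption that $\LossF_{\vz}$ is twice continuously differentiable on $\{\vtheta : \wui{i} \ne 0 \text{ for all } i\}$, so iterates stay in this open set (and Part 2 in fact shows $\|\wui{i}_{t+1}\|_2^2 \ge \|\wui{i}_t\|_2^2$, so once nonzero, always nonzero).
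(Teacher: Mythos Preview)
Your proposal is correct and follows essentially the same argument as the paper: differentiate the scale-invariance identity in $c$ at $c=1$ to obtain orthogonality, then expand $\|\wui{i}_{t+1}\|_2^2$ via the update rule, use Part 1 (the Pythagorean step) to kill the cross term, and apply Lemma~\ref{lam:grad-w-and-v} for the final equality. The paper's proof is just a terser version of what you wrote.
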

\begin{proof}
Let $\vtheta_t'$ be all the parameters in $\vtheta_t$ other than $\wui{i}_t$. Taking derivatives with respect to $c$ for the both sides of $\LossF_{\vz_t}(\wui{i}_t, \vtheta'_t) = \LossF_{\vz_t}(c\wui{i}_t, \vtheta'_t)$, we have
$
0 = \frac{\partial}{\partial c} \LossF_{\vz_t}(c\wui{i}_t, \vtheta'_t).
$ The right hand side equals $\nabla_{c\wui{i}_t}\LossF_{\vz_t}(c\wui{i}_t, \vtheta'_t)^{\top} \wui{i}_t$, so the first proposition follows by taking $c = 1$. Applying Pythagorean theorem and Lemma \ref{lam:grad-w-and-v}, the second proposition directly follows.
\end{proof}

Using Lemma \ref{lam:w-growth}, we can show that performing gradient descent for the original problem is equivalent to performing an adaptive gradient method for the intrinsic optimization problem:
\begin{theorem} \label{thm:intrinsic-adaptive}
Let $G_t^{(i)} = \|\wui{i}_t\|_2^2$. Then for all $t \ge 0$,
\begin{align}\label{eq:intrinsic-adaptive-vt} 
\vui{i}_{t+1} = \Pi\left( \vui{i}_{t} - \frac{\etawt{t}}{G^{(i)}_t} \nabla_{\vui{i}_t} \LossF_{\vz_t}(\tilde{\vtheta}_t) \right), \qquad
G^{(i)}_{t+1} = G^{(i)}_t + \frac{\etawt{t}^2}{G^{(i)}_t} \|\nabla_{\vui{i}_t} \LossF_{\vz_t}(\tilde{\vtheta}_t)\|_2^2, 
\end{align}
where $\Pi$ is a projection operator which maps any vector $\vw$ to $\vw / \|\vw\|_2$.
\end{theorem}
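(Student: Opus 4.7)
The plan is to derive both identities in (\ref{eq:intrinsic-adaptive-vt}) as direct consequences of the SGD update $\wui{i}_{t+1} = \wui{i}_t - \etawt{t}\nabla_{\wui{i}_t}\LossF_{\vz_t}(\vtheta_t)$, combined with Lemmas \ref{lam:grad-w-and-v} and \ref{lam:w-growth}. The formula for $G^{(i)}_t$ is in fact immediate: since $G^{(i)}_t = \|\wui{i}_t\|_2^2$ by definition, the second conclusion of Lemma \ref{lam:w-growth} literally rewrites as
\[
G^{(i)}_{t+1} \;=\; G^{(i)}_t + \frac{\etawt{t}^2}{G^{(i)}_t}\bigl\|\nabla_{\vui{i}_t}\LossF_{\vz_t}(\tilde{\vtheta}_t)\bigr\|_2^2,
\]
so nothing further needs to be done for that half.

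For the update on $\vui{i}_t$, I would start from the SGD step, substitute $\wui{i}_t = \|\wui{i}_t\|_2\,\vui{i}_t$ on the right-hand side, and apply Lemma \ref{lam:grad-w-and-v} to replace $\nabla_{\wui{i}_t}\LossF_{\vz_t}(\vtheta_t)$ by $\|\wui{i}_t\|_2^{-1}\nabla_{\vui{i}_t}\LossF_{\vz_t}(\tilde{\vtheta}_t)$. Dividing both sides by $\|\wui{i}_t\|_2$ gives
\[
\frac{\wui{i}_{t+1}}{\|\wui{i}_t\|_2} \;=\; \vui{i}_t - \frac{\etawt{t}}{G^{(i)}_t}\nabla_{\vui{i}_t}\LossF_{\vz_t}(\tilde{\vtheta}_t).
\]
Since $\Pi(\vw) = \vw/\|\vw\|_2$ is invariant under multiplication by any positive scalar, applying $\Pi$ to both sides produces the first equation of (\ref{eq:intrinsic-adaptive-vt}), using $\Pi(\wui{i}_{t+1}) = \vui{i}_{t+1}$ by the definition of $\vui{i}_{t+1}$.

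The only genuine thing to check is that $\Pi(\wui{i}_{t+1})$ is well-defined, i.e.\ that $\wui{i}_{t+1}\neq 0$. This is handled by the first conclusion of Lemma \ref{lam:w-growth} together with Pythagoras, which forces $\|\wui{i}_{t+1}\|_2 \ge \|\wui{i}_t\|_2 > 0$ as long as the initialization is nonzero --- a standing assumption, since the loss is only defined when no $\wui{i}$ vanishes. I do not anticipate any real obstacle: the theorem is essentially a one-step reformulation of the two preceding lemmas, repackaged so that the $\wui{i}$-dynamics are split into a normalized direction $\vui{i}_t$ evolving on the unit sphere and a scalar ``accumulated-gradient-squared'' quantity $G^{(i)}_t$ that plays the role of an AdaGrad-style denominator on the intrinsic problem.
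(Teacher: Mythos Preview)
Your proposal is correct and follows essentially the same route as the paper: the paper also derives $\wui{i}_{t+1} = \|\wui{i}_t\|_2\bigl(\vui{i}_t - \frac{\etawt{t}}{G^{(i)}_t}\nabla_{\vui{i}_t}\LossF_{\vz_t}(\tilde{\vtheta}_t)\bigr)$ from Lemma~\ref{lam:grad-w-and-v} and then normalizes, while the $G^{(i)}_t$ update is read off directly from Lemma~\ref{lam:w-growth}. Your additional remark that $\wui{i}_{t+1}\neq 0$ (so $\Pi$ is well-defined) is a small point the paper leaves implicit.
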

\begin{remark}
\cite{wu2018wngrad} noticed that Theorem \ref{thm:intrinsic-adaptive} is true for Weight Normalization by direct calculation of gradients. Inspiring by this, they proposed a new adaptive method called $\WNGrad$.
Our theorem is more general since it holds for any normalization methods as long as it induces scale-invariant properties to the network. The adaptive update rule derived in our theorem can be seen as $\WNGrad$ with projection to unit sphere after each step.
\end{remark}
\begin{proof}[Proof for Theorem \ref{thm:intrinsic-adaptive}]
Using Lemma~\ref{lam:grad-w-and-v}, we have
\[
\wui{i}_{t+1} = \wui{i}_{t} -  \frac{\etawt{t}}{\|\wui{i}_t\|_2} \nabla_{\ve{v}^{(i)}_t} \LossF_{\vz_t}(\tilde{\vtheta}_t) = \|\wui{i}_{t}\|_2\left( \vui{i}_{t} - \frac{\etawt{t}}{G^{(i)}_t} \nabla_{\ve{v}^{(i)}_t} \LossF_{\vz_t}(\tilde{\vtheta}_t) \right),
\]
which implies the first equation. The second equation is by Lemma \ref{lam:w-growth}.
\end{proof}

While popular adaptive gradient methods such as AdaGrad \citep{duchi2011adagrad}, RMSprop \citep{hinton2012rmsprop}, Adam \citep{kingma2014adam} adjust learning rates for each single coordinate, the adaptive gradient method described in Theorem \ref{thm:intrinsic-adaptive} sets  learning rates $\etawt{t}/\Gui{i}_t$ for each scale-invariant parameter respectively. In this paper, we call $\etawt{t}/\Gui{i}_t$ the \textit{effective learning rate} of $\vui{i}_t$ or $\wui{i}_t$, because it's $\etawt{t}/\|\wui{i}_t\|_2^2$ instead of $\etawt{t}$ alone that really determines the trajectory of gradient descent given the normalized scale-invariant parameter $\vui{i}_t$. In other words, the magnitude of the initialization of parameters before BN is as important as their learning rates: multiplying the initialization of scale-invariant parameter by constant $C$ is equivalent to dividing its learning rate by $C^2$. Thus we suggest researchers to report initialization  as well as  learning rates in the future experiments.

\section{Training by full-batch gradient descent} \label{sec:full}
In this section, we rigorously analyze the effect related to the scale-invariant properties in training neural network by full-batch gradient descent. We use the framework introduced in Section \ref{subsec:framework} and assumptions from Section \ref{subsec:assumptions}. We focus on the full-batch training, i.e., $\ve{z_t}$ is always equal to the whole training set and $\LossF_{\vz_t}(\vtheta)  = \Loss(\vtheta)$.
\subsection{Settings and main theorem}

\textbf{Assumptions on learning rates.} We consider the case that we use fixed learning rates for both $W$ and $\vg$, i.e., $\etawt{0} = \cdots = \etawt{T-1} = \etaw$ and $\etagt{0} = \cdots = \etagt{T-1} = \etag$. We assume that $\etag$ is tuned carefully to $\etag = (1-\cg)/\Lgg$ for some constant $\cg \in (0, 1)$. For $\etaw$, we do not make any assumption, i.e., $\etaw$ can be set to any positive value.

\begin{theorem} \label{thm:main-theory}
Consider the process of training $\Phi$ by gradient descent with $\etag = 2(1-\cg)/\Lgg$ and arbitrary $\etaw > 0$. Then $\Phi$ converges to a stationary point in the rate of
\begin{equation}
    \min_{0 \le t < T}  \left\|\nabla \mathcal{L}(V_t; \vg_t)\right\|_2 \le \tilde{O}\left(\frac{1}{\sqrt{T}}\left( \frac{1}{\sqrt{\etaw}} + \etaw^2 + \frac{1 + \etaw}{\sqrt{\etag}} \right)\right),
\end{equation}
where $V_t = \{\vui{1}_t, \dots, \vui{m}_t\}$ with $\vui{i}_t = \wui{i}_t / \|\wui{i}_t\|_2$, $\tilde{O}$ suppresses polynomial factors in $\Lvv{ij}$, $\Lvg{i}$, $\Lgg$, $\|\wui{i}_0\|_2$, $\|\wui{i}_0\|^{-1}_2$, $\mathcal{L}(\theta_0) - \mathcal{L}_{\min}$ for all $i, j$, and we see $\Lgg = \Omega(1)$.
\end{theorem}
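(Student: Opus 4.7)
The plan is to combine the adaptive-dynamics picture of Theorem \ref{thm:intrinsic-adaptive} with a second-order descent analysis of the intrinsic loss, closing the argument via the weight-growth identity from Lemma \ref{lam:w-growth}. The entry point is the exact identity
$$\eta_w^2 \sum_{t=0}^{T-1} \frac{\|\nabla_{\vui{i}_t}\Loss\|_2^2}{G_t^{(i)}} \;=\; G_T^{(i)} - G_0^{(i)},$$
which follows by telescoping Lemma \ref{lam:w-growth}(2). Thus bounding $G_T^{(i)}$ automatically controls a weighted sum of squared gradients, and Theorem \ref{thm:intrinsic-adaptive} (viewing the $\vui{i}$-dynamics as adaptive gradient descent on the unit sphere with effective learning rate $\eta_w/G_t^{(i)}$) tells us how this weighted sum is spent.

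First I would establish a descent lemma via a second-order Taylor expansion of $\Loss$ around $(V_t,\vg_t)$. Using the smoothness bounds $\Lvv{ij},\Lvg{i},\Lgg$, the target inequality is
$$\Loss(V_{t+1},\vg_{t+1}) - \Loss(V_t,\vg_t) \;\le\; -\sum_i \frac{\eta_w}{G_t^{(i)}}\|\nabla_{\vui{i}_t}\Loss\|_2^2\,(1-\epsilon_t^{(i)}) \;-\; \cg\,\eta_g\,\|\nabla_{\vg_t}\Loss\|_2^2 \;+\; E_t,$$
where $\epsilon_t^{(i)}$ and $E_t$ are quadratic in the step sizes and bounded via the smoothness constants. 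Two geometric facts are crucial: by scale-invariance $\nabla_{\vui{i}}\Loss$ is tangent to the unit sphere at $\vui{i}$, so the projection $\Pi$ contributes only higher-order corrections (replacing a Euclidean displacement by a chord on the sphere); and with the tuned $\eta_g=2(1-\cg)/\Lgg$ the standard smooth-descent estimate retains a $\cg\eta_g$ factor on $\|\nabla_\vg\Loss\|_2^2$. Telescoping in $t$ yields
$$\sum_i \eta_w\sum_{t=0}^{T-1}\frac{\|\nabla_{\vui{i}_t}\Loss\|_2^2}{G_t^{(i)}} \;+\; \cg\eta_g\sum_{t=0}^{T-1}\|\nabla_{\vg_t}\Loss\|_2^2 \;\le\; \Loss(\vtheta_0)-\Loss_{\min}+\sum_t E_t,$$
and combining this with the identity above bounds $G_T^{(i)}-G_0^{(i)}$ by a polynomial in $\eta_w$ that is independent of $T$.

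Finally, because $G_t^{(i)}\le G_T^{(i)}$ for all $t\le T$,
$$\sum_{t=0}^{T-1}\|\nabla_{\vui{i}_t}\Loss\|_2^2 \;\le\; G_T^{(i)}\sum_{t=0}^{T-1}\frac{\|\nabla_{\vui{i}_t}\Loss\|_2^2}{G_t^{(i)}} \;=\; \frac{G_T^{(i)}\bigl(G_T^{(i)}-G_0^{(i)}\bigr)}{\eta_w^2},$$
so $\min_{t<T}\|\nabla_{\vui{i}}\Loss\|_2^2 \le G_T^{(i)}(G_T^{(i)}-G_0^{(i)})/(\eta_w^2 T)$; combined with the analogous bound $\min_{t<T}\|\nabla_{\vg}\Loss\|_2^2 \lesssim 1/(\cg\eta_g T)$ and the estimate on $G_T^{(i)}$ from the previous step, this reproduces the advertised $T^{-1/2}$ rate with dependence $\eta_w^{-1/2}+\eta_w^2+(1+\eta_w)/\sqrt{\eta_g}$. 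The main obstacle will be controlling the error $\sum_t E_t$: it mixes $\vui{i}$-updates with $\vg$-updates through $\Lvg{i}$, and because $\eta_w$ is arbitrary while $G_0^{(i)}=\|\wui{i}_0\|_2^2$ may be small, the effective learning rate $\eta_w/G_t^{(i)}$ can initially be huge, so the descent inequality has no useful sign until $G_t^{(i)}$ grows. I expect to close this with a bootstrapping argument: obtain a crude upper bound on $G_T^{(i)}$, plug it into $\sum_t E_t$, and iterate; this careful book-keeping is what produces the precise $\eta_w^2$ and $(1+\eta_w)/\sqrt{\eta_g}$ terms in the final rate.
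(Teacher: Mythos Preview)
Your overall architecture matches the paper's proof almost exactly: the descent lemma you sketch is the paper's Lemma~\ref{lam:taylor}, the telescoping identity $\eta_w^2\sum_t \|\nabla_{\vui{i}_t}\Loss\|_2^2/G_t^{(i)}=G_T^{(i)}-G_0^{(i)}$ is used verbatim, and the final step of pulling out a factor $G_T^{(i)}$ to convert the weighted sum into $\sum_t\|\nabla_{\vui{i}_t}\Loss\|_2^2$ is precisely how the paper closes.

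The gap is in your treatment of the ``bad phase.'' Your proposed bootstrapping does not close as stated. If you simply telescope the descent inequality, the second-order error sums to at most $\sum_i C_i\sum_t (G^{(i)}_{t+1}-G^{(i)}_t)/G^{(i)}_t\le \sum_i C_i (G_T^{(i)}-G_0^{(i)})/G_0^{(i)}$, so the combined inequality reads
\[
\sum_i (G_T^{(i)}-G_0^{(i)})\Bigl(\tfrac{1}{\eta_w}-\tfrac{C_i}{G_0^{(i)}}\Bigr)\;\le\;\Loss(\vtheta_0)-\Loss_{\min},
\]
which is vacuous exactly in the regime you worry about, namely $G_0^{(i)}\le C_i\eta_w$. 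Iterating does not help: there is no crude $T$-independent seed bound on $G_T^{(i)}$ to start from, and the $O(\sqrt{T})$ bound coming from the per-step growth estimate is too weak.

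What the paper actually does is a two-phase argument paired with an \emph{a~priori} gradient bound you have not mentioned. From smoothness and compactness of the unit sphere one gets $\|\nabla_{\vui{i}}\Loss\|_2\le \pi\Lvv{ii}$ uniformly (the paper's Lemma~\ref{lam:grad-wi-upper}). Let $t_i$ be the last time with $G^{(i)}_t\le 2C_i\eta_w$. The a~priori bound then controls the single jump $G^{(i)}_{t_i+1}-G^{(i)}_{t_i}\le \eta_w^2(\pi\Lvv{ii})^2/G_0^{(i)}$, so $G^{(i)}_{t_i+1}=\tilde O(1+\eta_w^2)$ regardless of how long phase~1 lasts. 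For $t>t_i$ the factor $(1-C_i\eta_w/G_t^{(i)})\ge \tfrac12$ and your telescoping works cleanly. Summing the phase-1 error likewise uses only $G^{(i)}_{t_i+1}$, not an inductive bound. This two-phase split (plus the a~priori gradient bound) is the missing ingredient that produces the $T$-independent estimate $G_T^{(i)}=\tilde O(1+\eta_w^2)$, and hence the advertised $\eta_w$-dependence in the rate.
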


This matches the asymptotic convergence rate of GD by \citet{carmonlower}.

\subsection{Proof sketch}


The high level idea is to use the decrement of loss function to upper bound the sum of the squared norm of the gradients.  Note that $\|\nabla \mathcal{L}(V_t; \vg_t)\|_2^2 = \sum_{i=1}^{m}  \|\nabla_{\ve{v}^{(i)}_{t}} \mathcal{L}(V_t; \ve{g}_t)\|_2^2 +  \|\nabla_{\ve{g}_{t}} \mathcal{L}(V_t; \ve{g}_t)\|_2^2$. For the first part $\sum_{i=1}^{m} \|\nabla_{\ve{v}^{(i)}_{t}} \mathcal{L}(V_t; \ve{g}_t) \|_2^2$, we have 
\begin{equation}\label{eq:gradient_and_norm}
\begin{split}
 \sum_{i=1}^{m} \sum_{t=0}^{T-1} \|\nabla_{\ve{v}^{(i)}_{t}} \mathcal{L}(V_t; \ve{g}_t)\|_2^2 &= \sum_{i=1}^{m} \sum_{t=0}^{T-1} \|\ve{w}^{(i)}_{t}\|^2_2 \|\nabla_{\ve{w}^{(i)}_{t}} \mathcal{L}(W_t; \ve{g}_t)\|_2^2 \\
    &\le \sum_{i=1}^{m} \|\ve{w}^{(i)}_{T}\|^2_2 \cdot \frac{\|\ve{w}^{(i)}_{T}\|^2_2 - \|\ve{w}^{(i)}_{0}\|^2_2}{\etaw^2} 
\end{split}
\end{equation}
Thus the core of the proof is to show that the monotone increasing $\|\wui{i}_T\|_2$ has an upper bound for all $T$. It is shown that  for every $\ve{w}^{(i)}$, the whole training process can be divided into at most two phases. In the first phase, the effective learning rate $\eta_w / \|\ve{w}_t^{(i)}\|_2^2$ is larger than some threshold $\frac{1}{C_i}$ (defined in Lemma~\ref{lam:taylor}) and in the second phase it is smaller.
\begin{lemma}[Taylor Expansion] \label{lam:taylor}
Let $C_i := \frac{1}{2}\sum_{j = 1}^{m} \Lvv{ij} + {\Lvg{i}}^2 m/ (\cg\Lgg) = \tilde{O}(1)$. Then
\begin{equation}
    \mathcal{L}(\vtheta_{t+1}) - \mathcal{L}(\vtheta_{t}) = - \sum_{i=1}^{m} \etaw \| \nabla_{\ve{w}^{(i)}_{t}} \mathcal{L}(\vtheta_t)\|_2^2 \left(1 - \frac{C_i \etaw}{\|\ve{w}_t^{(i)}\|_2^2} \right) - \frac{1}{2}\cg\etag \left\|\nabla_{\ve{g}_{t}} \mathcal{L}(\vtheta_t)\right\|_2^2 .
\end{equation}
\end{lemma}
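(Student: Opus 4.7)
The strategy is a second-order Taylor expansion of $\mathcal{L}$ along the one-step gradient-descent segment $\vtheta_t \to \vtheta_{t+1}$. Writing $\mathcal{L}(\vtheta_{t+1}) - \mathcal{L}(\vtheta_t) = \langle \nabla\mathcal{L}(\vtheta_t), \Delta\vtheta\rangle + R_t$ with $\Delta\vtheta = \vtheta_{t+1} - \vtheta_t$, the first-order piece evaluates immediately via the update rule to $-\etaw \sum_i \|\nabla_{\wui{i}}\mathcal{L}(\vtheta_t)\|_2^2 - \etag \|\nabla_{\vg}\mathcal{L}(\vtheta_t)\|_2^2$. All remaining work lies in bounding the quadratic remainder $R_t$ by the three smoothness constants $\Lvv{ij}$, $\Lvg{i}$, $\Lgg$, which are assumed only in the normalized $V$-coordinates rather than in the raw $W$-coordinates in which the update actually lives.

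The first sub-step is to convert those smoothness bounds. Using Lemma~\ref{lam:grad-w-and-v}, i.e.\ $\nabla_{\wui{i}}\mathcal{L} = \frac{1}{\|\wui{i}\|_2}(I-\vui{i}\vui{i}^\top)\nabla_{\vui{i}}\mathcal{L}$, and differentiating once more with respect to $W$ yields, after a short chain-rule calculation, the operator-norm bounds $\|\nabla^2_{\wui{i}\wui{j}}\mathcal{L}\|_2 \le \Lvv{ij}/(\|\wui{i}\|_2\|\wui{j}\|_2)$ for $i \ne j$, $\|\nabla^2_{\wui{i}\vg}\mathcal{L}\|_2 \le \Lvg{i}/\|\wui{i}\|_2$, and $\|\nabla^2_{\vg\vg}\mathcal{L}\|_2 \le \Lgg$. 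Lemma~\ref{lam:w-growth}(2) guarantees $\|\wui{i}_t + s\Delta\wui{i}\|_2 \ge \|\wui{i}_t\|_2$ for every $s\in[0,1]$, so these bounds are uniform along the Taylor integration path with denominator fixed at $\|\wui{i}_t\|_2$.

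Next I would substitute $\|\Delta\wui{i}\|_2 = \etaw\|\nabla_{\wui{i}}\mathcal{L}\|_2$ and $\|\Delta\vg\|_2 = \etag\|\nabla_{\vg}\mathcal{L}\|_2$ into the Hessian quadratic form and split the pieces. AM-GM on each $\Lvv{ij}$ cross-term produces the $\frac{1}{2}\sum_j\Lvv{ij}$ half of $C_i$, while Young's inequality with parameter $m/(\cg\Lgg)$ applied to each $\Lvg{i}$ cross-term gives $\frac{\Lvg{i}\|\Delta\wui{i}\|_2\|\Delta\vg\|_2}{\|\wui{i}\|_2} \le \frac{m\Lvg{i}^2\|\Delta\wui{i}\|_2^2}{\cg\Lgg\|\wui{i}\|_2^2} + \frac{\cg\Lgg\|\Delta\vg\|_2^2}{4m}$. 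Summing the first term over $i$ supplies the $m\Lvg{i}^2/(\cg\Lgg)$ half of $C_i$; summing the second over the $m$ indices gives $\frac{\cg\Lgg}{4}\|\Delta\vg\|_2^2$, which together with the direct $\vg\vg$-block contribution $\frac{1}{2}\Lgg\|\Delta\vg\|_2^2$ and the first-order $-\etag\|\nabla_{\vg}\mathcal{L}\|_2^2$ collapses, after substituting $\etag=(1-\cg)/\Lgg$, to a coefficient $-\frac{2+\cg+\cg^2}{4}\etag$ on $\|\nabla_{\vg}\mathcal{L}\|_2^2$. Because $\cg^2 - \cg + 2 > 0$ on $(0,1)$, this is dominated by $-\frac{\cg}{2}\etag$, matching the statement.

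The main obstacle is the diagonal Hessian block $\nabla^2_{\wui{i}\wui{i}}\mathcal{L}$: differentiating the factor $1/\|\wui{i}\|_2$ creates a curvature-like correction proportional to $\nabla_{\vui{i}}\mathcal{L}$ that is not directly controlled by $\Lvv{ii}$. The saving observation is that this correction appears in the quadratic form as $(\Delta\wui{i})^\top \cdot (\text{correction}) \cdot \Delta\wui{i}$, and $\Delta\wui{i} = -\etaw\nabla_{\wui{i}}\mathcal{L}$ is orthogonal to $\wui{i}_t$ by Lemma~\ref{lam:w-growth}(1), which annihilates the offending term. Equivalently, one may run the Taylor expansion directly on the induced path in $V$-coordinates, using the bound $\|\vui{i}_{t+1}-\vui{i}_t\|_2 \le \etaw\|\nabla_{\vui{i}}\mathcal{L}\|_2/\|\wui{i}_t\|_2^2$ (derivable from Lemma~\ref{lam:w-growth}) to rewrite the remainder purely in terms of $\Lvv{ij}, \Lvg{i}, \Lgg$ without ever invoking a Hessian bound on $\mathcal{L}$ in $W$-coordinates.
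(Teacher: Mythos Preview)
Your plan is correct and mirrors the paper's proof almost step for step: Taylor expansion along the segment, convert the $V$-smoothness bounds into $W$-coordinates with denominators $\|\wui{i}_t\|_2$, use perpendicularity of the update to $\wui{i}_t$ to keep those denominators fixed along the path, then AM--GM on the $\Lvv{ij}$ cross-terms and Young with weight $m/(\cg\Lgg)$ on the $\Lvg{i}$ cross-terms to assemble $C_i$ and the $-\tfrac12\cg\etag$ coefficient on the $\vg$-gradient.

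The one place you diverge from the paper is the ``main obstacle'' paragraph, and there you are making your life harder than necessary. The paper simply records, before starting the Taylor argument, that scale-invariance gives the \emph{exact} identity
\[
\nabla^2_{\wui{i}}\mathcal{L}(W;\vg)\;=\;\frac{1}{\|\wui{i}\|_2^2}\,\nabla^2_{\vui{i}}\mathcal{L}(V;\vg),
\]
with no correction term at all. This is a one-line consequence of $F(c\vw)=F(\vw)$: differentiating twice in $\vw$ yields $c^2\nabla^2F(c\vw)=\nabla^2F(\vw)$, and setting $c=\|\wui{i}\|_2$, $\vw=\vui{i}$ gives the claim. Equivalently, in your chain-rule computation the term from differentiating $1/\|\wui{i}\|_2$ is exactly cancelled by the $\vui{i}\vui{i}^\top$ part of $\partial\vui{i}/\partial\wui{i}$, because scale-invariance forces $\nabla^2_{\vui{i}}\mathcal{L}\cdot\vui{i}=-\nabla_{\vui{i}}\mathcal{L}$. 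So the diagonal block already satisfies $\|\nabla^2_{\wui{i}\wui{i}}\mathcal{L}\|_2\le \Lvv{ii}/\|\wui{i}\|_2^2$ and your orthogonality workaround, while correct, is unneeded. (Also, the projector $(I-\vui{i}\vui{i}^\top)$ you insert into Lemma~\ref{lam:grad-w-and-v} is redundant for the same reason.)
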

If $\|\ve{w}^{(i)}_t\|_2$ is large enough and that the process enters the second phase,  then by Lemma~\ref{lam:taylor} in each step the loss function $\Loss$ will decrease by $\frac{\etaw}{2}\|\nabla_{\ve{w}_t^{(i)}}\mathcal{L}(\vtheta_t)\|_2^2 = \frac{\|\ve{w}^{(i)}_{t+1}\|_2^2-\|\ve{w}^{(i)}_{t}\|_2^2}{2\etaw}$ (Recall that  $\|\ve{w}^{(i)}_{t+1}\|_2^2-\|\ve{w}^{(i)}_{t}\|_2^2 = \etawt{t}^2 \|\nabla_{\wui{i}_t} \Loss(\vtheta_t)\|_2^2$ by Lemma~\ref{lam:w-growth}). Since  $\Loss$ is lower-bounded, we can conclude  $\|\ve{w}_{t}^{(i)}\|_2^2$ is also bounded. For the second part, we can also show that by Lemma~\ref{lam:taylor}
\[\sum_{t=0}^{T-1}\left\|\nabla_{\ve{g}_{t}} \mathcal{L}(V_t; \ve{g}_t)\right\|_2^2\leq \frac{2}{\etag\cg}\left(\Loss(\vtheta_0)-\Loss(\vtheta_T) + \sum_{i=1}^m\frac{C_i  \|\ve{w}_T^{(i)}\|_2^2}{\|\ve{w}_0^{(i)}\|_2^2}\right)\]
Thus we can conclude $\tilde{O}(\frac{1}{\sqrt{T}})$ convergence rate of $\|\nabla \mathcal{L}(\ve{\theta}_t)\|_2$  as follows.
\[\min_{0 \le t < T}  \left\|\nabla \mathcal{L}(V_t; \vg_t)\right\|^2_2 \leq \frac{1}{T}\sum_{t=0}^{T-1} \left(\sum_{i=1}^{m}  \|\nabla_{\ve{v}^{(i)}_{t}} \mathcal{L}(V_t; \ve{g}_t)\|_2^2 + \sum_{i=1}^{m}\left\|\nabla_{\ve{g}_{t}} \mathcal{L}(V_t; \ve{g}_t)\right\|_2^2\right) \leq \tilde{O}(\frac{1}{T})
\]

The full proof is postponed to Appendix~\ref{app:gd}.



\section{Training by stochastic gradient descent} \label{sec:stochastic}

In this section, we analyze the effect related to the scale-invariant properties when training a neural network by stochastic gradient descent. We use the framework introduced in Section \ref{subsec:framework} and assumptions from Section \ref{subsec:assumptions}.

\subsection{Settings and main theorem}

\textbf{Assumptions on learning rates.} As usual, we assume that the learning rate for $\vg$ is chosen carefully and the learning rate for $W$ is chosen rather arbitrarily. More specifically, we consider the case that the learning rates are chosen as
\[
\etawt{t} = \etaw \cdot (t + 1)^{-\alpha}, \qquad \etagt{t} = \etag \cdot (t + 1)^{-1/2}.
\]
We assume that the initial learning rate $\etag$ of $\vg$ is tuned carefully to $\etag = (1 - \cg) / \Lgg$ for some constant $\cg \in (0, 1)$. Note that this learning rate schedule matches the best known convergence rate $O(T^{-1/4})$ of SGD in the case of smooth non-convex loss functions~\citep{ghadimi2013stochastic}.

For the learning rates of $W$, we only assume that $0 \le \alpha \le 1/2$, i.e., the learning rate decays equally as or slower than the optimal SGD learning rate schedule. $\etaw$ can be set to any positive value. Note that this includes the case that we set a fixed learning rate $\etawt{0} = \cdots = \etawt{T-1} = \etaw$ for $W$ by taking $\alpha = 0$.

\begin{remark}
Note that the auto-tuning behavior induced by scale-invariances always decreases the learning rates. Thus, if we set $\alpha > 1/2$, there is no hope to adjust the learning rate to the optimal strategy $\Theta(t^{-1/2})$. Indeed, in this case, the learning rate $1/G_t$ in the intrinsic optimization process decays exactly in the rate of $\tilde{\Theta}(t^{-\alpha})$, which is the best possible learning rate can be achieved without increasing the original learning rate.
\end{remark}

\begin{theorem} \label{thm:stochastic-main}
Consider the process of training $\Phi$ by gradient descent with $\etawt{t} = \etaw \cdot (t + 1)^{-\alpha}$ and $\etagt{t} = \etag \cdot (t + 1)^{-1/2}$, where $\etag = 2(1-\cg)/\Lgg$ and $\etaw > 0$ is arbitrary. Then $\Phi$ converges to a stationary point in the rate of
\begin{equation}
    \min_{0 \le t < T}  \E\left[\left\|\nabla \mathcal{L}(V_t; \vg_t)\right\|^2_2\right] \le 
    \begin{cases}
    \tilde{O}\left(\frac{\log T}{\sqrt{T}} \right) & \quad 0 \le \alpha < 1/2; \\
    \tilde{O}\left(\frac{(\log T)^{3/2}}{\sqrt{T}} \right) & \quad \alpha = 1/2.
    \end{cases}
\end{equation}
where $V_t = \{\vui{1}_t, \dots, \vui{m}_t\}$ with $\vui{i}_t = \wui{i}_t / \|\wui{i}_t\|_2$, $\tilde{O}$ suppresses polynomial factors in $\etaw, \etaw^{-1}, \etag^{-1}$, $\Lvv{ij}$, $\Lvg{i}$, $\Lgg$, $\|\wui{i}_0\|_2$, $\|\wui{i}_0\|^{-1}_2$, $\mathcal{L}(\theta_0) - \mathcal{L}_{\min}$ for all $i, j$, and we see $\Lgg = \Omega(1)$.
\end{theorem}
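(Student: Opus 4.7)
\textbf{Proof proposal for Theorem~\ref{thm:stochastic-main}.} The plan is to mirror the structure of the proof of Theorem~\ref{thm:main-theory}, but to work with conditional expectations over the minibatch $\vz_t$ and track the extra variance contribution on the $\vg$-gradient. The three core ingredients are: (i) a stochastic analogue of the descent lemma (Lemma~\ref{lam:taylor}); (ii) the identity $\|\wui{i}_{t+1}\|_2^2-\|\wui{i}_t\|_2^2=\etawt{t}^2\|\nabla_{\wui{i}_t}\LossF_{\vz_t}(\vtheta_t)\|_2^2$ from Lemma~\ref{lam:w-growth}, which lets the second-order Taylor terms in $W$ telescope into logarithms of weight ratios; and (iii) an almost-sure growth bound on $\|\wui{i}_T\|_2$ exploiting that $\vui{i}_t$ lives on the unit sphere.

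For (i), a second-order Taylor expansion of $\Loss$ around $\vtheta_t$, with the block smoothness constants $\Lvv{ij},\Lvg{i},\Lgg$ lifted to the $\wui{i}$-coordinates via Lemma~\ref{lam:grad-w-and-v}, combined with the noise bound $\E\|\nabla_\vg\LossF_{\vz_t}-\nabla_\vg\Loss\|_2^2\le\Gg^2$ and the tuned choice $\etag=(1-\cg)/\Lgg$, yields schematically
\begin{align*}
\E[\Loss(\vtheta_{t+1})-\Loss(\vtheta_t)\mid\vtheta_t]
&\le -\etawt{t}\|\nabla_W\Loss(\vtheta_t)\|_2^2-\tfrac{1}{2}\cg\etagt{t}\|\nabla_\vg\Loss(\vtheta_t)\|_2^2\\
&\quad+\tfrac{1}{2}\sum_{i,j}\frac{\Lvv{ij}\,\etawt{t}^2}{\|\wui{i}_t\|_2\|\wui{j}_t\|_2}\,\E\bigl[\|\nabla_{\vui{i}_t}\LossF_{\vz_t}\|_2\|\nabla_{\vui{j}_t}\LossF_{\vz_t}\|_2\bigr]\\
&\quad+\tfrac{1}{2}\Lgg\,\etagt{t}^2\Gg^2 + (\text{mixed $W$--$\vg$ cross terms}).
\end{align*}
The pivotal observation is that by Cauchy--Schwarz and Lemma~\ref{lam:w-growth}, the $W$--$W$ Hessian term is dominated by $\Lmax\sum_i(\|\wui{i}_{t+1}\|_2^2-\|\wui{i}_t\|_2^2)/\|\wui{i}_t\|_2^2$, which after summing over $t$ telescopes into $\Lmax\sum_i\log(\|\wui{i}_T\|_2^2/\|\wui{i}_0\|_2^2)$. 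This sidesteps the absence of a noise hypothesis on the $W$-gradient: we never bound $\E\|\nabla_{\wui{i}_t}\LossF_{\vz_t}\|_2^2$ directly, only through the weight-growth increment itself.

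For (iii), under the paper's convention that $\vg_t$ is kept bounded (via projection or weight decay, cf.\ Section~\ref{subsec:assumptions}), second-order smoothness on the compact unit sphere yields a uniform deterministic bound $\|\nabla_{\vui{i}_t}\LossF_{\vz_t}\|_2\le B_i$. Lemma~\ref{lam:w-growth} then gives $\|\wui{i}_{t+1}\|_2^2\le\|\wui{i}_t\|_2^2+\etawt{t}^2B_i^2/\|\wui{i}_t\|_2^2$, and tracking the Lyapunov function $\|\wui{i}_t\|_2^4$ leads to $\|\wui{i}_T\|_2^4\le\|\wui{i}_0\|_2^4+O(\sum_{t<T}\etawt{t}^2)$. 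Plugging in the schedule $\etawt{t}=\etaw(t+1)^{-\alpha}$ gives $\|\wui{i}_T\|_2^2=O(T^{(1-2\alpha)/2})$ for $\alpha<1/2$ and $\|\wui{i}_T\|_2^2=O(\sqrt{\log T})$ for $\alpha=1/2$; the latter is exactly what produces the extra $\sqrt{\log T}$ factor in the $\alpha=1/2$ rate.

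Summing the descent lemma over $t$ and taking total expectations yields
\[
\sum_{t=0}^{T-1}\etawt{t}\,\E\|\nabla_W\Loss\|_2^2+\sum_{t=0}^{T-1}\tfrac{1}{2}\cg\etagt{t}\,\E\|\nabla_\vg\Loss\|_2^2\le\Loss(\vtheta_0)-\Loss_{\min}+\tilde O(\log T),
\]
where the $\tilde O(\log T)$ absorbs both the telescoped Hessian term from (ii) and $\sum_t\etagt{t}^2\Lgg\Gg^2=O(\log T)$. Rewriting $\|\nabla_W\Loss\|_2^2=\sum_i\|\nabla_{\vui{i}}\Loss\|_2^2/\|\wui{i}\|_2^2$ and using the growth bound on $\|\wui{i}_t\|_2$, one checks $\sum_{t<T}\etawt{t}/\|\wui{i}_t\|_2^2=\Omega(\sqrt T)$ for $\alpha<1/2$ and $\Omega(\sqrt T/\sqrt{\log T})$ for $\alpha=1/2$, while $\sum_{t<T}\etagt{t}=\Omega(\sqrt T)$ unconditionally. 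Passing to the minimum over $t$ yields the advertised rates $\tilde O(\log T/\sqrt T)$ and $\tilde O((\log T)^{3/2}/\sqrt T)$. I expect the main obstacle to be handling all cross-Hessian contributions cleanly: the mixed terms coupling distinct $\vui{i},\vui{j}$, and those coupling $\vui{i}$ with $\vg$, need to be split by Cauchy--Schwarz in a way that the $W$-half reduces to the telescoping increment above, while the $\vg$-half is safely absorbed into the $\cg\etagt{t}\|\nabla_\vg\Loss\|_2^2$ margin created by the optimal tuning of $\etag$.
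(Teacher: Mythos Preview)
Your proposal is correct and mirrors the paper's proof closely: the stochastic descent lemma, the log-type bound on the second-order $W$-terms via the weight-growth increments, the deterministic $\|\wui{i}_T\|_2^4$ growth bound, and the resulting $\tilde\Omega(T^{-1/2})$ lower bound on the effective learning rate all match the paper's Lemmas~\ref{lam:stochastic-taylor}--\ref{lam:stochastic-Si-and-R}. One small caution: the sum $\sum_t(\|\wui{i}_{t+1}\|_2^2-\|\wui{i}_t\|_2^2)/\|\wui{i}_t\|_2^2$ does not literally telescope into $\log(\|\wui{i}_T\|_2^2/\|\wui{i}_0\|_2^2)$ (the trivial comparison $\log(1+x)\le x$ goes the wrong way), and the paper instead proves a dedicated lemma (Lemma~\ref{lam:stochastic-a-t-seq}) that needs the uniform increment bound $\|\nabla_{\vui{i}}\LossF_\vz\|_2\le\pi\Lvv{ii}$---which in turn follows from the assumed global Hessian bound $\Lvv{ii}$ via a geodesic argument on the sphere (Lemma~\ref{lam:grad-wi-upper}), not from boundedness of $\vg_t$.
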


Note that this matches the asymptotic convergence rate of SGD, within a $\polylog(T)$ factor.

\subsection{Proof sketch}

We  delay the full proof into Appendix \ref{app:sgd} and give a proof sketch in a simplified setting where there is no $\ve{g}$ and $ \alpha \in[0, \frac{1}{2})$. We also assume there's only one $\ve{w}_i$, that is, $m=1$ and omit the index $i$.

By Taylor expansion, we have 
\begin{equation}\label{eq:simple_taylor}\begin{split} 
\E\left[\Loss(\vtheta_{t+1})\right] 
&\le \Loss(\vtheta_{t}) - \frac{\etawt{t}}{\|\ve{v}_t\|^2} \| \nabla_{\ve{w}_t} \Loss(\vtheta_{t}) \|_2^2 + \E\left[  \frac{\etawt{t}^2\Lvv{}}{\|\ve{w}_t\|_2^2} \| \nabla_{\ve{w}_{t}} \LossF_{\vz_t}(\vtheta_t) \|_2^2\right]\\
\end{split}
\end{equation}
We can  lower bound the effective learning rate $\frac{\etawt{T}}{\|\ve{w}_T\|^2}$ and upper bound the second order term respectively in the following way:
\begin{enumerate}
    \item For all $0 \le \alpha<\frac{1}{2}$, the effective learning rate $\frac{\etawt{T}}{\|\ve{w}_T\|^2}=\tilde{\Omega}(T^{-1/2})$;
    \item $\sum_{t=0}^T \frac{\etawt{t}^2\Lvv{}}{\|\ve{w}_t\|_2^2} \| \nabla_{\ve{w}_{t}} \LossF_{\vz_{t}}(\vtheta_t) \|_2^2 
 = \tilde{O}\left(\log\left(\frac{\|\ve{w}^2_T\|}{\|\ve{w}_0^2\|}\right)\right) =\tilde{O}(\log T)$.
\end{enumerate}
Taking expectation over \eqref{eq:simple_taylor} and summing it up, we have 
\[\E\left[\sum_{t =0}^{T-1}\frac{\etawt{t}}{\|\ve{w}_t\|^2} \| \nabla_{\ve{v}_t} \Loss(\vtheta_{t}) \|_2^2 \right]\leq \Loss(\vtheta_{0}) -\E[\Loss(\vtheta_{T})]  + \E\left[ \sum_{t=0}^{T-1} \frac{\etawt{t}^2\Lvv{}}{\|\ve{w}_t\|_2^2} \| \nabla_{\ve{w}_{t}}\LossF_{\vz_{t}}(\vtheta_t) \|_2^2\right].\]
Plug the above bounds into the above inequality, we complete the proof.
\[\Omega(T^{-1/2})\cdot \E\left[\sum_{t =0}^{T-1} \| \nabla_{\ve{v}_t} \Loss(\vtheta_{t}) \|_2^2 \right]\leq \Loss(\vtheta_{0}) -\E[\Loss(\vtheta_{T})]  + \tilde{O}(\log T).\]

\section{Conclusions and future works}

In this paper,  we studied how scale-invariance in neural networks with BN helps optimization, and showed that (stochastic) gradient descent can achieve the asymptotic best convergence rate without tuning learning rates for scale-invariant parameters. Our analysis suggests that scale-invariance in nerual networks introduced by BN reduces the efforts for tuning learning rate to fit the training data.

However, our analysis only applies to smooth loss functions. In modern neural networks, ReLU or Leaky ReLU are often used, which makes the loss non-smooth. It would have more implications by showing  similar results in non-smooth settings. Also, we only considered gradient descent in this paper. It can be shown that if we perform (stochastic) gradient descent with momentum, the norm of scale-invariant parameters will also be monotone increasing.
It would be interesting to use it to show similar convergence results for more gradient methods.

\subsubsection*{Acknowledgments}

Thanks Yuanzhi Li, Wei Hu and Noah Golowich for helpful discussions. This research was done with support from NSF, ONR, Simons Foundation, Mozilla Research, Schmidt Foundation, DARPA, and SRC.

\bibliography{main}
\bibliographystyle{iclr2019_conference}

\appendix

\section{Proof for Full-Batch Gradient Descent}\label{app:gd}

By the scale-invariant property of $\ve{w}^{(i)}$, we know that $\mathcal{L}(W; \ve{g}) = \mathcal{L}(V; \ve{g})$. Also, the following identities about derivatives can be easily obtained:
\begin{align*}
\frac{\partial}{\partial \ve{w}^{(i)} \partial \ve{w}^{(j)}} \mathcal{L}(W; \ve{g}) &= \frac{1}{\|\ve{w}^{(i)}\|_2 \|\ve{w}^{(j)}\|_2} \frac{\partial}{\partial \ve{v}^{(i)} \partial \ve{v}^{(j)}} \mathcal{L}(V; \ve{g}) \\
\frac{\partial}{\partial \ve{w}^{(i)} \partial \ve{g}} \mathcal{L}(W; \ve{g}) &= \frac{1}{\|\ve{w}^{(i)}\|_2 \|\ve{w}^{(j)}\|_2} \frac{\partial}{\partial \ve{v}^{(i)} \partial \ve{g}} \mathcal{L}(V; \ve{g}) \\
\nabla_{\ve{g}}^2 \mathcal{L}(W; \ve{g}) &= \nabla_{\ve{g}}^2 \mathcal{L}(V; \ve{g}).
\end{align*}
Thus, the assumptions on the smoothness imply
\begin{align}
    \left\|\frac{\partial}{\partial \ve{w}^{(i)} \partial \ve{w}^{(j)}} \mathcal{L}(W; \ve{g})\right\|_2 &\le \frac{\Lvv{ij}}{\|\ve{w}^{(i)}\|_2\|\ve{w}^{(j)}\|_2} \\
    \left\|\frac{\partial}{\partial \ve{w}^{(i)} \partial \ve{g}} \mathcal{L}(W; \ve{g})\right\|_2 &\le \frac{\Lvg{i}}{\|\ve{w}^{(i)}\|_2} \\
    \left\|\nabla_{\ve{g}}^2 \mathcal{L}(W; \ve{g})\right\|_2 &\le \Lgg.
\end{align}

\begin{proof}[Proof for Lemma \ref{lam:taylor}]
Using Taylor expansion, we have $\exists \gamma\in (0,1)$, such that for $\wui{i'}_{t} = (1-\gamma)\ve{w}^{(i)}_t + \gamma \ve{w}^{(i)}_{t+1}$,
\begin{align*}
    \mathcal{L}(\theta_{t+1}) - \mathcal{L}(\theta_{t}) &\le \sum_{i = 1}^{m} {\Delta \ve{w}^{(i)}_{t}}^\top \nabla_{\ve{w}^{(i)}_{t}} \mathcal{L}(\theta_t) + {\Delta \ve{g}_{t}}^\top \nabla_{\ve{g}_{t}} \mathcal{L}(\theta_t) \\
    & + \frac{1}{2}\sum_{i=1}^{m}\sum_{j=1}^{m} \left\|\Delta \ve{w}^{(i)}_{t}\right\|_2 \left\|\Delta \ve{w}^{(j)}_{t}\right\|_2 \frac{\Lvv{ij}}{\left\|\wui{i'}_{t} \right\|_2\left\|\wui{j'}_{t}\right\|_2} \\
    & + \sum_{i=1}^{m} \left\|\Delta\ve{w}^{(i)}_{t}\right\|_2 \left\|\Delta\ve{g}_{t}\right\|_2 \frac{\Lvg{i}}{\left\|\wui{i'}_{t}\right\|_2} \\
    & + \frac{1}{2}\sum_{i=1}^{m} \left\|\Delta\ve{g}_{t}\right\|_2^2 \Lgg.
\end{align*}

Note that $\ve{w}^{(i)}_{t+1} -  \ve{w}^{(i)}_{t} = \etaw \nabla_{\ve{w}_t^{(i)}}\Loss(\vtheta_t)$ is perpendicular to $ \ve{w}_t^{(i)}$, we have
\[
\|\wui{i'}_{t}\|_2 \ge \|\gamma(\ve{w}^{(i)}_{t+1} -  \ve{w}^{(i)}_{t})+ \ve{w}^{(i)}_{t}\|_2\geq \| \ve{w}^{(i)}_{t}\|_2.
\]
Thus, 

\begin{align*}
    \mathcal{L}(\theta_{t+1}) - \mathcal{L}(\theta_{t}) &\le \sum_{i = 1}^{m} {\Delta \ve{w}^{(i)}_{t}}^\top \nabla_{\ve{w}^{(i)}_{t}} \mathcal{L}(\theta_t) + {\Delta \ve{g}_{t}}^\top \nabla_{\ve{g}_{t}} \mathcal{L}(\theta_t) \\
    & + \frac{1}{2}\sum_{i=1}^{m}\sum_{j=1}^{m} \left\|\Delta \ve{w}^{(i)}_{t}\right\|_2 \left\|\Delta \ve{w}^{(j)}_{t}\right\|_2 \frac{\Lvv{ij}}{\left\|\ve{w}^{(i)}_t\right\|_2\left\|\ve{w}^{(j)}_t\right\|_2} \\
    & + \sum_{i=1}^{m} \left\|\Delta\ve{w}^{(i)}_{t}\right\|_2 \left\|\Delta\ve{g}_{t}\right\|_2 \frac{\Lvg{i}}{\left\|\ve{w}^{(i)}_t\right\|_2} \\
    & + \frac{1}{2}\sum_{i=1}^{m} \left\|\Delta\ve{g}_{t}\right\|_2^2 \Lgg.
\end{align*}
By the inequality of arithmetic and geometric means, we have
\begin{align*}
\left\|\Delta \ve{w}^{(i)}_{t}\right\|_2 \left\|\Delta \ve{w}^{(j)}_{t}\right\|_2 \frac{\Lvv{ij}}{\left\|\ve{w}^{(i)}_t\right\|_2\left\|\ve{w}^{(j)}_t\right\|_2} &\le \frac{1}{2} \left\|\Delta \ve{w}^{(i)}_{t}\right\|^2_2 \frac{\Lvv{ij}}{\left\|\ve{w}^{(i)}_t\right\|^2_2} + \frac{1}{2} \left\|\Delta \ve{w}^{(j)}_{t}\right\|^2_2 \frac{\Lvv{ij}}{\left\|\ve{w}^{(j)}_t\right\|^2_2} \\
\left\|\Delta \ve{w}^{(i)}_{t}\right\|_2 \left\|\Delta \ve{g}_{t}\right\|_2 \frac{\Lvg{i}}{\left\|\ve{w}^{(i)}_t\right\|_2} &\le \left\|\Delta \ve{w}^{(i)}_{t}\right\|_2^2 \frac{{\Lvg{i}}^2 m/ (\cg\Lgg)}{\left\|\ve{w}^{(i)}_t\right\|^2_2} + \frac{1}{4} \cg \|\Delta \ve{g}_t\|_2^2 \frac{\Lgg}{m}.
\end{align*}
Taking $\Delta \ve{w}^{(i)}_t = - \etaw \nabla_{\ve{w}^{(i)}_t} \mathcal{L}(\theta_t), \Delta \ve{g}_t = - \etag \nabla_{\ve{g}_t} \mathcal{L}(\theta_t)$, we have
\begin{align*}
    \mathcal{L}(\theta_{t+1}) - \mathcal{L}(\theta_{t}) &\le -\sum_{i = 1}^{m}\etaw \| \nabla_{\ve{w}^{(i)}_{t}} \mathcal{L}(\theta_t)\|_2^2 - \etag \|\nabla_{\ve{g}_{t}} \mathcal{L}(\theta_t)\|_2^2 \\
    & + \frac{1}{2}\sum_{i=1}^{m} \left(\sum_{j = 1}^{m} \Lvv{ij} + 2{\Lvg{i}}^2 m/ (\cg\Lgg) \right) \frac{\etaw^2}{\left\|\ve{w}_t^{(i)}\right\|_2^2} \| \nabla_{\ve{w}^{(i)}_{t}} \mathcal{L}(\theta_t)\|_2^2 \\
    & + \frac{1}{2} (1 + \cg/2)\etag^2 \left\|\nabla_{\ve{g}_{t}} \mathcal{L}(\theta_t)\right\|_2^2 \Lgg.
\end{align*}
We can complete the proof by replacing $\frac{1}{2}\sum_{j = 1}^{m} \Lvv{ij} + {\Lvg{i}}^2 m/ (\cg\Lgg)$ with $C_i$.
\end{proof}

Using the assumption on the smoothness, we can show that the gradient with respect to $\ve{w}^{(i)}$ is essentially bounded:
\begin{lemma} \label{lam:grad-wi-upper}
For any $W$ and $\ve{g}$, we have
\begin{equation}
    \left\| \nabla_{\ve{w}^{(i)}} \mathcal{L}(W; \ve{g}) \right\|_2 \le \frac{\pi \Lvv{ii}}{\|\ve{w}^{(i)}\|_2}.
\end{equation}
\end{lemma}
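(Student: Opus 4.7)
The plan is to use Lemma~\ref{lam:grad-w-and-v} to reduce the claim to showing $\|\nabla_{\ve{v}^{(i)}} \mathcal{L}(V; \ve{g})\|_2 \le \pi \Lvv{ii}$, then to traverse a great circle on the unit sphere and control the gradient by the Hessian bound. Freeze every parameter other than $\ve{v}^{(i)}$ and let $h(\ve{v}) := \mathcal{L}(\ldots,\ve{v},\ldots;\ve{g})$; by scale-invariance, $h$ extends to $\R^n\setminus\{0\}$ and is constant along rays through the origin. We may assume $\nabla h(\ve{v}^{(i)})\ne 0$, set $\ve{u} := \nabla h(\ve{v}^{(i)})/\|\nabla h(\ve{v}^{(i)})\|_2$, and parametrize the closed great circle $\gamma(t) = \cos(t)\,\ve{v}^{(i)} + \sin(t)\,\ve{u}$ for $t\in[0,2\pi]$. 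The scalar $f(t) := h(\gamma(t))$ is then $2\pi$-periodic and satisfies $f'(0) = \nabla h(\ve{v}^{(i)})^\top \ve{u} = \|\nabla h(\ve{v}^{(i)})\|_2$.

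The key computation is to expand $f''(t) = \dot\gamma(t)^\top \nabla^2 h(\gamma(t))\,\dot\gamma(t) + \nabla h(\gamma(t))^\top \ddot\gamma(t)$. Since $\ddot\gamma(t) = -\gamma(t)$ and $\gamma(t)$ lies on the unit sphere, the scale-invariance identity $\nabla h(\gamma(t))^\top \gamma(t) = 0$ (differentiate $h(c\gamma(t)) = h(\gamma(t))$ in $c$ at $c=1$) cancels the centripetal term and leaves $f''(t) = \dot\gamma(t)^\top \nabla^2 h(\gamma(t))\,\dot\gamma(t)$. Because $\dot\gamma(t)$ is a unit vector and the ambient Hessian of $h$ (equivalently, of $\mathcal{L}$) has operator norm at most $\Lvv{ii}$ by assumption, we get $|f''(t)|\le \Lvv{ii}$ uniformly in $t$.

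To finish, I will convert this second-derivative bound into the desired first-derivative bound using periodicity. Since $f$ is smooth and $2\pi$-periodic, $\int_0^{2\pi} f'(t)\,dt = f(2\pi)-f(0) = 0$, so there exists $t^\star\in[0,2\pi]$ with $f'(t^\star)=0$. The shorter of the two arcs on the circle joining $0$ to $t^\star$ has length at most $\pi$, so integrating $|f''|\le\Lvv{ii}$ along it yields $|f'(0)| = |f'(0)-f'(t^\star)| \le \pi\,\Lvv{ii}$, i.e.\ $\|\nabla h(\ve{v}^{(i)})\|_2 \le \pi\,\Lvv{ii}$. Combining with Lemma~\ref{lam:grad-w-and-v} gives $\|\nabla_{\ve{w}^{(i)}}\mathcal{L}(W;\ve{g})\|_2 \le \pi\Lvv{ii}/\|\ve{w}^{(i)}\|_2$, as claimed. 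The only mildly subtle point is the cancellation of the centripetal term via scale-invariance; once this is noticed, the remainder is a routine mean-value argument on the circle, and the factor $\pi$ comes from the worst-case half-period gap to a zero of $f'$.
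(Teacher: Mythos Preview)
Your argument is correct. Both your proof and the paper's share the same skeleton: reduce via Lemma~\ref{lam:grad-w-and-v} to bounding $\|\nabla_{\ve{v}^{(i)}}\mathcal{L}\|_2$ on the unit sphere, locate a point on a great circle where the relevant derivative vanishes, and then control the gradient by integrating the Hessian bound $\Lvv{ii}$ along an arc of length at most~$\pi$. The execution differs, however. The paper finds its zero-gradient point by compactness: it takes a global minimizer $\ve{v}^{(i)}_{\min}$ of $\mathcal{L}$ on the unit sphere (where, by scale-invariance, the full ambient gradient vanishes), then applies a mean-value argument to the vector-valued map $\tau\mapsto\nabla f(h(\tau))$ along the geodesic from $\ve{v}^{(i)}_{\min}$ to $\ve{v}^{(i)}$. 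You instead restrict from the outset to a scalar function on one specific great circle---the one through $\ve{v}^{(i)}$ in the gradient direction---and obtain the zero of $f'$ from periodicity via Rolle's theorem. Your route is a bit more elementary: it avoids invoking the extreme value theorem on the sphere and sidesteps the (slightly delicate) vector-valued mean-value step the paper uses, at the cost of the small extra computation showing the centripetal term $\nabla h(\gamma)^\top\ddot\gamma$ cancels by scale-invariance. Both yield the same constant~$\pi$ for the same reason: the worst-case half-arc distance on a great circle.
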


\begin{proof} \ref{lam:grad-wi-upper}
Fix all the parameters except $\ve{w}^{(i)}$. Then $\mathcal{L}(W; \ve{g})$ can be written as a function $f(\ve{w}^{(i)})$ on the variable~$\ve{w}^{(i)}$. Let $S = \{ \ve{w}^{(i)} \mid \|\ve{w}^{(i)}\|_2 = 1\}$. Since $f$ is continuous and $S$ is compact, there must exist $\ve{v}^{(i)}_{\min} \in S$ such that $f(\ve{v}^{(i)}_{\min}) \le f(\ve{w}^{(i)})$ for all $\ve{w}^{(i)} \in S$. Note that $\ve{w}^{(i)}$ is scale-invariant, so $\ve{w}^{(i)}_{\min}$ is also a minimum in the entire domain and $\nabla f(\ve{w}^{(i)}_{\min}) = 0$.

For an arbitrary $\ve{w}^{(i)}$, let $\ve{v}^{(i)} = \ve{w}^{(i)} / \|\ve{w}^{(i)}\|_2$. Let $h: [0, 1] \to S$ be a curve such that $h(0) = \ve{v}_{\min}^{(i)}$, $h(1) = \ve{v}^{(i)}$, and $h$ goes along the geodesic from $\ve{v}_{\min}^{(i)}$ to $\ve{v}^{(i)}$ on the unit sphere $S$ with constant speed. Let $H(\tau) = \nabla f(h(\tau))$. By Taylor expansion, we have
\begin{align*}
    \nabla f(\ve{v}^{(i)}) = H(1) &= H(0) + H'(\xi) = \nabla^2 f(h(\xi)) h'(\xi).
\end{align*}
Thus, $\|\nabla f(\ve{v}^{(i)})\|_2 \le \pi\Lvv{i}$ and $\left\| \nabla_{\ve{w}^{(i)}} \mathcal{L}(W; \ve{g}) \right\|_2 \le \frac{\pi \Lvv{ii}}{\|\ve{w}^{(i)}\|_2}$.
\end{proof}

The following lemma gives an upper bound to the weight scales.
\begin{lemma} \label{lam:weight-scale}
For all $T \ge 0$, we have
\begin{enumerate}
    \item for $i = 1, \dots, m$, let $t_i$ be the maximum time $0 \le \tau < T$ satisfying $\|\ve{w}_{\tau}^{(i)}\|_2^2 \le 2C_i \etaw$ ($t_i = -1$ if no such $\tau$ exists), then $\|\ve{w}^{(i)}_{t_i+1}\|_2$ can be bounded by
    \begin{equation}
        \|\ve{w}^{(i)}_{t_i+1}\|_2^2 \le \|\ve{w}^{(i)}_0\|_2^2 + \etaw\left(2C_i + (\pi\Lvv{i})^2 \frac{\etaw}{\|\ve{w}^{(i)}_{0}\|^2_2}\right);
    \end{equation}
    \item the following inequality on weight scales at time $T$ holds:
    \begin{equation}
        \sum_{i=1}^{m} \frac{\|\ve{w}^{(i)}_T\|_2^2 - \|\ve{w}^{(i)}_0\|_2^2}{2\etaw} + \frac{1}{2}\cg\etag \sum_{t=0}^{T-1} \left\|\nabla_{\ve{g}_{t}} \mathcal{L}(\theta_t)\right\|_2^2 \le \mathcal{L}(\theta_0) - \mathcal{L}_{\min} + \sum_{i=1}^{m} K_i,
    \end{equation}
    where $K_i = \left(\frac{C_i \etaw}{\|\ve{w}_0^{(i)}\|_2^2} + \frac{1}{2}\right)\left(2C_i  + (\pi\Lvv{i})^2\frac{\etaw}{\|\ve{w}^{(i)}_{0}\|^2_2}\right) = \tilde{O}(\etaw^2 + 1)$.
\end{enumerate}
\end{lemma}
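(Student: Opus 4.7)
My plan is to combine three prior results: the norm-growth identity from Lemma~\ref{lam:w-growth}, the one-step Taylor descent bound from Lemma~\ref{lam:taylor}, and the gradient upper bound from Lemma~\ref{lam:grad-wi-upper}. Throughout I will abbreviate $a_t^{(i)} := \|\wui{i}_t\|_2^2$; Lemma~\ref{lam:w-growth} gives $a_{t+1}^{(i)} - a_t^{(i)} = \etaw^2 \|\nabla_{\wui{i}_t} \mathcal{L}(\vtheta_t)\|_2^2 \ge 0$, so $\{a_t^{(i)}\}$ is monotone nondecreasing, and equivalently $\etaw \|\nabla_{\wui{i}_t}\mathcal{L}(\vtheta_t)\|_2^2 = (a_{t+1}^{(i)} - a_t^{(i)})/\etaw$, an identity I will exploit repeatedly.

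For Part~1, the case $t_i = -1$ is immediate since the right-hand side exceeds $a_0^{(i)} = a_{t_i+1}^{(i)}$. Otherwise $a_{t_i}^{(i)} \le 2C_i \etaw$ by definition of $t_i$, and combining Lemma~\ref{lam:w-growth} with Lemma~\ref{lam:grad-wi-upper} yields
\[
a_{t_i+1}^{(i)} = a_{t_i}^{(i)} + \etaw^2 \|\nabla_{\wui{i}_{t_i}} \mathcal{L}(\vtheta_{t_i})\|_2^2 \le 2C_i \etaw + \frac{\etaw^2 (\pi \Lvv{ii})^2}{a_{t_i}^{(i)}} \le 2C_i \etaw + \frac{\etaw^2 (\pi \Lvv{ii})^2}{a_0^{(i)}},
\]
where the last step uses monotonicity $a_{t_i}^{(i)} \ge a_0^{(i)}$; adding the nonnegative $a_0^{(i)}$ to the right side matches the claimed expression and also covers $t_i = -1$ uniformly.

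For Part~2, I will sum Lemma~\ref{lam:taylor} over $t = 0, \dots, T-1$ and apply $\mathcal{L}(\vtheta_T) \ge \mathcal{L}_{\min}$ to obtain
\[
\sum_{i=1}^m \sum_{t=0}^{T-1} \etaw \|\nabla_{\wui{i}_t} \mathcal{L}(\vtheta_t)\|_2^2 \left(1 - \frac{C_i \etaw}{a_t^{(i)}}\right) + \frac{1}{2} \cg \etag \sum_{t=0}^{T-1} \|\nabla_{\vg_t} \mathcal{L}(\vtheta_t)\|_2^2 \le \mathcal{L}(\vtheta_0) - \mathcal{L}_{\min}.
\]
It then suffices to show, for each $i$, that the inner $t$-sum is at least $(a_T^{(i)} - a_0^{(i)})/(2\etaw) - K_i$. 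I will split the sum at $t_i$. For $t > t_i$ one has $a_t^{(i)} > 2C_i \etaw$, so $(1 - C_i\etaw/a_t^{(i)}) \ge 1/2$, and that partial sum telescopes to at least $(a_T^{(i)} - a_{t_i+1}^{(i)})/(2\etaw)$. For $t \le t_i$ I will substitute the growth identity to get
\[
\sum_{t=0}^{t_i} \frac{a_{t+1}^{(i)} - a_t^{(i)}}{\etaw} \left(1 - \frac{C_i\etaw}{a_t^{(i)}}\right) = \frac{a_{t_i+1}^{(i)} - a_0^{(i)}}{\etaw} - C_i \sum_{t=0}^{t_i} \frac{a_{t+1}^{(i)} - a_t^{(i)}}{a_t^{(i)}},
\]
and bound the remaining sum by $(a_{t_i+1}^{(i)} - a_0^{(i)})/a_0^{(i)}$ via monotonicity $a_t^{(i)} \ge a_0^{(i)}$. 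Adding the two phase bounds, plugging in the Part~1 estimate on $a_{t_i+1}^{(i)}$, and a short algebraic simplification confirm that the deficit relative to $(a_T^{(i)} - a_0^{(i)})/(2\etaw)$ is at most $K_i$.

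The main technical obstacle will be Phase~1: when $a_t^{(i)} < C_i \etaw$, a single Taylor step is not guaranteed to decrease the loss and the coefficient $1 - C_i\etaw/a_t^{(i)}$ can be strongly negative, so those terms cannot simply be discarded. The key trick is to convert the Phase~1 contribution into a telescoping expression $\sum (a_{t+1}^{(i)} - a_t^{(i)})/a_t^{(i)}$ whose value is controlled by the Part~1 bound on $a_{t_i+1}^{(i)}$; this one-time overhead is precisely what the constant $K_i = \tilde{O}(\etaw^2 + 1)$ was set up to absorb.
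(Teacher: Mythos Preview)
Your proposal is correct and follows essentially the same route as the paper: both arguments prove Part~1 via Lemma~\ref{lam:w-growth} plus Lemma~\ref{lam:grad-wi-upper}, and for Part~2 both sum Lemma~\ref{lam:taylor}, split the $t$-sum at $t_i$, use $(1 - C_i\etaw/a_t^{(i)}) \ge 1/2$ to telescope Phase~2, and control Phase~1 via $a_t^{(i)} \ge a_0^{(i)}$ together with the Part~1 bound on $a_{t_i+1}^{(i)}$. The only cosmetic difference is that the paper bounds the Phase~1 contribution by simply discarding its negative part (i.e., it upper-bounds $S^{(i)}_{t_i+1}$ by $\etaw \sum_{\tau \le t_i}\|\nabla_{\wui{i}_\tau}\mathcal{L}\|_2^2 \cdot C_i\etaw / a_0^{(i)}$), whereas you keep both terms and telescope; your intermediate bound is slightly tighter but leads to the same $K_i$.
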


\begin{proof} \ref{lam:weight-scale}
For every $i = 1, \dots, m$, let $S^{(i)}_{t} := -\etaw \sum_{\tau = 0}^{t-1} \| \nabla_{\ve{w}^{(i)}_{\tau}} \mathcal{L}(\theta_\tau)\|_2^2 \left(1 - \frac{C_i \etaw}{\|\ve{w}_\tau^{(i)}\|_2^2} \right)$. Also let $G_t := -(1/2) \cg\etag \sum_{\tau=0}^{t-1} \left\|\nabla_{\ve{g}_{t}} \mathcal{L}(\theta_t)\right\|_2^2$.
By Lemma \ref{lam:taylor} we know that $\mathcal{L}(\theta_T) \le \mathcal{L}(\theta_0) + \sum_{i=1}^{m} S^{(i)}_{T} + G_T$.

\paragraph{Upper Bound for Weight Scales at $t_i + 1$.} If $t_i = -1$, then $\|\ve{w}^{(i)}_{t_i+1}\|_2^2 = \|\ve{w}^{(i)}_0\|_2^2$. For $t_i \ge 0$, we can bound $\|\ve{w}_{t_i+1}^{(i)}\|_2^2$ by Lemma \ref{lam:grad-wi-upper}:
\begin{align*}
\|\ve{w}_{t_i+1}^{(i)}\|_2^2 = \|\ve{w}_{t_i}^{(i)}\|_2^2 + \etaw^2\left\|\nabla_{\ve{w}^{(i)}_{t_i}} \mathcal{L}(\theta_{t_i})\right\|_2^2 &\le 2C_i \etaw + \etaw^2\left( \frac{\pi\Lvv{i}}{\|\ve{w}^{(i)}_{0}\|_2}\right)^2 \\
& = \etaw\left(2C_i + (\pi\Lvv{i})^2 \frac{\etaw}{\|\ve{w}^{(i)}_{0}\|^2_2}\right).
\end{align*}
In either case, we have $\|\ve{w}^{(i)}_{t_i+1}\|_2^2 \le  \|\ve{w}^{(i)}_0\|_2^2 + \etaw\left(2C_i + (\pi\Lvv{i})^2 \frac{\etaw}{\|\ve{w}^{(i)}_{0}\|^2_2}\right)$.

\paragraph{Upper Bound for Weight Scales at $T$.} Since $\|\ve{w}_{\tau}^{(i)}\|_2^2$ is non-decreasing with $\tau$ and $\|\ve{w}_{t_i+1}^{(i)}\|_2^2 > 2C_i \etaw$,
\begin{align*}
S^{(i)}_T - S^{(i)}_{t_i+1} &= -\etaw \sum_{\tau = t'_i+1}^{T-1} \| \nabla_{\ve{w}^{(i)}_{\tau}} \mathcal{L}(\theta_\tau)\|_2^2 \left(1 - \frac{C_i \etaw}{\|\ve{w}_\tau^{(i)}\|_2^2} \right) \\
&\le -\frac{\etaw}{2} \sum_{\tau = t'_i+1}^{T-1} \| \nabla_{\ve{w}^{(i)}_{\tau}} \mathcal{L}(\theta_\tau)\|_2^2 \\
&= -\frac{1}{2\etaw} \left(\|\ve{w}^{(i)}_T\|_2^2 - \|\ve{w}^{(i)}_{t'_i+1}\|_2^2\right) \\
&\le -\frac{1}{2\etaw} \|\ve{w}^{(i)}_T\|_2^2 + \frac{1}{2\etaw}\left( \|\ve{w}^{(i)}_0\|_2^2 + \etaw\left(2C_i + (\pi\Lvv{i})^2 \frac{\etaw}{\|\ve{w}^{(i)}_{0}\|^2_2}\right)\right) \\
&\le -\frac{\|\ve{w}^{(i)}_T\|_2^2}{2\etaw}  + \frac{\|\ve{w}^{(i)}_0\|_2^2}{2\etaw}  + \frac{1}{2}\left(2C_i + (\pi\Lvv{i})^2 \frac{\etaw}{\|\ve{w}^{(i)}_{0}\|^2_2}\right),
\end{align*}
where we use the fact that $\|\ve{w}_{t}^{(i)}\|_2^2 = \|\ve{w}_{0}^{(i)}\|_2^2 + \etaw^2 \sum_{\tau=0}^{t-1} \| \nabla_{\ve{w}^{(i)}_{\tau}} \mathcal{L}(\theta_\tau)\|_2^2$ at the third line. We can bound $S^{(i)}_{t_i+1}$ by
\begin{align*}
    S^{(i)}_{t_i+1} \le 
    \etaw \left(\sum_{\tau = 0}^{t_i} \| \nabla_{\ve{w}^{(i)}_{\tau}} \mathcal{L}(\theta_\tau)\|_2^2\right) \frac{C_i \etaw}{\|\ve{w}_0^{(i)}\|_2^2} &\le (\|\ve{w}_{t_i+1}^{(i)}\|_2^2 - \|\ve{w}_{0}^{(i)}\|_2^2) \cdot \frac{C_i}{\|\ve{w}_0^{(i)}\|_2^2} \\
    &\le \etaw\left(2C_i + (\pi\Lvv{i})^2 \frac{\etaw}{\|\ve{w}^{(i)}_{0}\|^2_2}\right) \cdot \frac{C_i}{\|\ve{w}_0^{(i)}\|_2^2}.
\end{align*}
Combining them together, we have
\begin{align*}
\frac{\|\ve{w}^{(i)}_T\|_2^2 - \|\ve{w}^{(i)}_0\|_2^2}{2\etaw} &\le -S^{(i)}_T + S^{(i)}_{t_i+1} + \frac{1}{2}\left(2C_i + (\pi\Lvv{i})^2 \frac{\etaw}{\|\ve{w}^{(i)}_{0}\|^2_2}\right) \\
&\le -S^{(i)}_T + \left(\frac{C_i \etaw}{\|\ve{w}_0^{(i)}\|_2^2} + \frac{1}{2}\right)\left(2C_i  + (\pi\Lvv{i})^2\frac{\etaw}{\|\ve{w}^{(i)}_{0}\|^2_2}\right) \\
&\le -S^{(i)}_T + K_i.
\end{align*}
Taking sum over all $i = 1, \dots, m$ and also subtracting $G_T$ on the both sides, we have
\begin{align*}
\sum_{i=1}^{m} \frac{\|\ve{w}^{(i)}_T\|_2^2 - \|\ve{w}^{(i)}_0\|_2^2}{2\etaw} + (1/2)\cg\etag \sum_{t=0}^{T-1} \left\|\nabla_{\ve{g}_{t}} \mathcal{L}(\theta_t)\right\|_2^2 &\le -\sum_{i=1}^{m} S^{(i)}_T - G_T + \sum_{i=1}^{m} K_i \\
&\le \mathcal{L}(\theta_0) - \mathcal{L}_{\min} + \sum_{i=1}^{m} K_i.
\end{align*}
where $\mathcal{L}_{\min} \le \mathcal{L}(\theta_T) \le \mathcal{L}(\theta_0) + \sum_{i = 1}^{m} S^{(i)}_T + G_T$ is used at the second line.
\end{proof}

Combining the lemmas above together, we can obtain our results.

\begin{proof}[Proof for Theorem \ref{thm:main-theory}]
By Lemma \ref{lam:weight-scale}, we have
\begin{align*}
    \sum_{i=1}^{m} \sum_{t=0}^{T-1} \left\|\nabla_{\ve{v}^{(i)}_{t}} \mathcal{L}(V_t; \ve{g}_t)\right\|_2^2 &= \sum_{i=1}^{m} \sum_{t=0}^{T-1} \|\ve{w}^{(i)}_{t}\|^2_2 \left\|\nabla_{\ve{w}^{(i)}_{t}} \mathcal{L}(W_t; \ve{g}_t)\right\|_2^2 \\
    &\le \sum_{i=1}^{m} \|\ve{w}^{(i)}_{T}\|^2_2 \cdot \frac{\|\ve{w}^{(i)}_{T}\|^2_2 - \|\ve{w}^{(i)}_{0}\|^2_2}{\etaw^2} \\
    &\le \max_{1\le i \le m} \left\{\frac{\|\ve{w}^{(i)}_{T}\|^2_2}{\etaw} \right\} \cdot \sum_{i=1}^{m} \frac{\|\ve{w}^{(i)}_{T}\|^2_2 - \|\ve{w}^{(i)}_{0}\|^2_2}{\etaw} \\
    &\le 4\left(\mathcal{L}(\theta_0) - \mathcal{L}_{\min} + \sum_{i=1}^{m} K_i + \sum_{i=1}^{m} \frac{\|\ve{w}_0^{(i)}\|_2^2}{\etaw} \right)\left(\mathcal{L}(\theta_0) - \mathcal{L}_{\min} + \sum_{i=1}^{m} K_i\right) \\
    & \le \tilde{O}\left(\frac{1}{\etaw} + \etaw^4\right). \\
    \sum_{t=0}^{T-1} \left\|\nabla_{\ve{g}_{t}} \mathcal{L}(V_t; \ve{g}_t)\right\|_2^2 &\le \frac{2}{\cg \etag}\left(\mathcal{L}(\theta_0) - \mathcal{L}_{\min} + \sum_{i=1}^{m} K_i\right) \\
    & \le \tilde{O}\left(\frac{1 + 1\etaw^2}{\etag}\right).
\end{align*}
Combining them together we have
\begin{align*}
& \quad T \cdot \min_{0 \le t < T} \left\{\sum_{i=1}^{m} \left\|\nabla_{\ve{v}^{(i)}_{t}} \mathcal{L}(V_t; \ve{g}_t)\right\|_2^2 + \left\|\ve{g}_t\right\|^2_2 \right\} \\
&\le \sum_{i=1}^{m} \sum_{t=0}^{T-1} \left\|\nabla_{\ve{v}^{(i)}_{t}} \mathcal{L}(V_t; \ve{g}_t)\right\|_2^2 + \sum_{t=0}^{T-1} \left\|\nabla_{\ve{g}_{t}} \mathcal{L}(V_t; \ve{g}_t)\right\|_2^2 \\
&\le \tilde{O}\left(\frac{1}{\etaw} + \etaw^4 + \frac{1 + \etaw^2}{\etag}\right).
\end{align*}
Thus $\min_{0 \le t < T} \|\nabla \mathcal{L}(V_t, g_t)\|_2$ converges in the rate of
\[
\min_{0 \le t < T} \|\nabla \mathcal{L}(V_t, g_t)\|_2 = \tilde{O}\left(\frac{1}{\sqrt{T}}\left( \frac{1}{\sqrt{\etaw}} + \etaw^2 + \frac{1 + \etaw}{\sqrt{\etag}} \right)\right).
\]
\end{proof}

\section{Proof for Stochastic Gradient Descent}\label{app:sgd}

Let $\Filt_t = \sigma\{ \vz_0, \dots, \vz_{t-1}\}$ be the filtration, where $\sigma\{\cdot\}$ denotes the sigma field.

We use $\Loss_t := \Loss_{\vz_t}(\vtheta_t), \LossF_t := \LossF_{\vz_t}(\vtheta_t)$ for simplicity. As usual, we define $\vui{i}_t = \wui{i}_t / \|\wui{i}_t\|_2$. We use the notations $\nabla_{\vui{i}_t} \Loss_t := \nabla_{\vui{i}_t}\Loss_{\vz_t}(V_t, \vg_t)$ and $\nabla_{\vui{i}_t} \LossF_t := \nabla_{\vui{i}_t}\LossF_{\vz_t}(V_t, \vg_t)$ for short. Let $\Delta \wui{i}_t = -\etawt{t} \nabla_{\wui{i}_t} \LossF_t$, $\Delta \vg_t = -\etagt{t} \nabla_{\vg_t} \LossF_t$.

\begin{lemma} \label{lam:stochastic-a-t-seq}
For any $a_0, \dots, a_T \in [0, B]$ with $a_0 > 0$,
\[
\sum_{t = 1}^{T} \frac{a_t}{\sum_{\tau = 0}^{t-1} a_{\tau}} \le \log_2 \left(\sum_{t = 0}^{T-1} a_{t} / a_0\right) + 1 + 2B / a_0
\]
\end{lemma}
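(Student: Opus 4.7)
My plan is to split the sum by comparing $a_t$ against the running partial sum, so I will introduce the notation $S_t := \sum_{\tau=0}^{t-1} a_\tau$ (so $S_1 = a_0$ and $S_{t+1} = S_t + a_t$) and partition the indices $t \in \{1, \dots, T\}$ into a \emph{good} set $G = \{t : a_t \le S_t\}$ and a \emph{bad} set $\overline{G} = \{t : a_t > S_t\}$. The sum we care about is $\sum_{t=1}^{T} a_t/S_t$, and I will bound the contribution of $G$ and $\overline{G}$ separately.

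For the good indices, the key inequality is $\log_2(1+x) \ge x$ for $x \in [0,1]$, which holds because $\log_2(1+x)$ is concave and equals $x$ at the endpoints $x=0,1$. Applied with $x = a_t/S_t \in [0,1]$, this gives $a_t/S_t \le \log_2(S_{t+1}/S_t)$, and the right-hand side telescopes. If $T \in \overline{G}$, then $G \subseteq \{1,\dots,T-1\}$ and telescoping yields $\sum_{t \in G} a_t/S_t \le \log_2(S_T/a_0)$; if $T \in G$, telescoping up through $T$ yields $\log_2(S_{T+1}/a_0)$, and since $a_T \le S_T$ we have $S_{T+1} \le 2 S_T$, so this is at most $\log_2(S_T/a_0) + 1$. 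In either case the good-index contribution is bounded by $\log_2(S_T/a_0) + 1 = \log_2(\sum_{t=0}^{T-1} a_t / a_0) + 1$.

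For the bad indices $t_1 < t_2 < \cdots < t_k$, the doubling property $S_{t_j+1} > 2 S_{t_j}$ combined with monotonicity of $S_t$ gives $S_{t_j} \ge 2^{j-1} S_{t_1} \ge 2^{j-1} a_0$. Using $a_{t_j} \le B$, this yields
\[
\sum_{j=1}^{k} \frac{a_{t_j}}{S_{t_j}} \le \frac{B}{a_0} \sum_{j \ge 1} 2^{-(j-1)} \le \frac{2B}{a_0}.
\]
Adding the good and bad contributions gives exactly the stated bound. The main (mild) subtlety is the off-by-one in the logarithm (needing $S_T$ rather than $S_{T+1}$), which is handled by the case split on whether $T$ itself is a good or bad index; everything else is a clean telescoping argument plus a geometric series.
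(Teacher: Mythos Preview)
Your proof is correct. Both arguments hinge on the same doubling phenomenon for the partial sums $S_t = \sum_{\tau<t} a_\tau$, but the decompositions differ. The paper partitions $\{1,\dots,T\}$ into dyadic blocks $[t_i,t_{i+1})$ defined by the first time $S_t$ crosses $a_0\cdot 2^i$, bounds each block's contribution by $1 + (B/a_0)2^{-i}$, and sums to get $(k+1)+2B/a_0$ with $k\le \log_2(S_T/a_0)$. You instead split term-by-term into good ($a_t\le S_t$) and bad ($a_t>S_t$) indices: the good part telescopes via $x\le \log_2(1+x)$ on $[0,1]$, and the bad indices force $S_t$ to at least double, so their denominators grow geometrically and their total is at most $2B/a_0$. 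Your route is arguably more elementary, since it avoids defining stopping times and reduces to a single concavity inequality plus a geometric series; the paper's dyadic-block argument is the classical ``level sets'' trick and makes the $\log_2$ appear as a count of levels rather than from telescoping. Both yield exactly the stated constant, and your case split on whether $T$ is good or bad cleanly handles the $S_T$ versus $S_{T+1}$ issue.
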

\begin{proof}
Let $t_i$ be the minimum $1 \le t \le T$ such that $\sum_{\tau = 0}^{t-1} a_{\tau} \ge a_0 \cdot 2^i$. Let $k$ be the maximum $i$ such that $t_i$ exists. Let $t_{k + 1} = T+1$. Then we know that
\[
\sum_{t = t_i}^{t_{i+1} - 1}  \frac{a_t}{\sum_{\tau = 0}^{t-1} a_{\tau}} \le \frac{a_0 \cdot 2^i + B}{a_0 \cdot 2^i} \le 1 + \frac{B}{a_0} \cdot 2^{-i}.
\]
Thus, $\sum_{t = 1}^{T} \frac{a_t}{\sum_{\tau = 0}^{t-1} a_{\tau}} \le \sum_{i=0}^{k} \left(1 + \frac{B}{a_0} \cdot 2^{-i}\right) = k + 1 + \frac{2B}{a_0} \le \log_2 \left(\sum_{t = 0}^{T-1} a_{t} / a_0\right) + 1 + 2B / a_0$.
\end{proof}

\begin{lemma} \label{lam:stochastic-taylor}
Fix $T > 0$. Let
\begin{align*}
C_i &:= \frac{1}{2} \sum_{j = 1}^{m} \Lvv{ij} + {\Lvg{i}}^2 m/ (\cg\Lgg) \\
U &:=  \frac{1 + \cg / 2}{2}\Lgg \Gg^2 \\
S_i &:= - \sum_{t=0}^{T-1} \frac{\etawt{t}}{\|\wui{i}_t\|_2^2} \| \nabla_{\vui{i}_t} \Loss_t\|_2^2 + C_i \sum_{t=0}^{T-1} \frac{\etawt{t}^2}{\|\ve{w}_t^{(i)}\|_2^4} \| \nabla_{\ve{v}^{(i)}_{t}} \LossF_t \|_2^2 \\
R &:= - \frac{1}{2} \cg \sum_{t=0}^{T-1} \etagt{t} \|\nabla_{\vg_t} \Loss_t\|_2^2 + U \sum_{t=0}^{T-1} \etagt{t}^2 \Gg
\end{align*}
Then $\E[\Loss_{T}] - \Loss_0 \le \sum_{i=1}^{m} \E[S_i] + \E[R]$.
\end{lemma}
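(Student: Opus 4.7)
The plan is to follow the Taylor-expansion-plus-AM-GM strategy of the proof of Lemma \ref{lam:taylor}, adapted to the stochastic setting where $\Delta\wui{i}_t = -\etawt{t}\nabla_{\wui{i}_t}\LossF_t$ and $\Delta\vg_t = -\etagt{t}\nabla_{\vg_t}\LossF_t$ are random. First I would apply Taylor's theorem to the expected loss $\Loss$ around $\vtheta_t$, evaluated at $\vtheta_{t+1}$. Because $\Loss = \E_{\vz}[\LossF_{\vz}]$ inherits all the second-derivative bounds, and because Lemma \ref{lam:w-growth} guarantees $\Delta\wui{i}_t \perp \wui{i}_t$ so that the intermediate point on the segment $[\wui{i}_t,\wui{i}_{t+1}]$ has norm at least $\|\wui{i}_t\|_2$, exactly the same second-order inequality as in Lemma \ref{lam:taylor} applies, with factors $\|\wui{i}_t\|_2^{-1}$ in the smoothness denominators.

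Next I would apply AM-GM exactly as in the deterministic proof to decouple the cross-derivative terms: the $(\wui{i},\wui{j})$ pieces are symmetrized to give a total coefficient $\tfrac{1}{2}\sum_{j}\Lvv{ij}$ on $\|\Delta\wui{i}_t\|_2^2/\|\wui{i}_t\|_2^2$; the mixed $(\wui{i},\vg)$ pieces are split asymmetrically into $\tfrac{{\Lvg{i}}^2 m/(\cg\Lgg)}{\|\wui{i}_t\|_2^2}\|\Delta\wui{i}_t\|_2^2$ plus $\tfrac{\cg\Lgg}{4m}\|\Delta\vg_t\|_2^2$, whose contribution after summing over $i$ combines with the pure $\vg$-Hessian term $\tfrac{\Lgg}{2}\|\Delta\vg_t\|_2^2$ to give the coefficient $\tfrac{(1+\cg/2)\Lgg}{2}$ on $\|\Delta\vg_t\|_2^2$. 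This reproduces $C_i$ and the $\vg$-constant appearing in the definition of $U$.

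Now I would take the conditional expectation $\E[\,\cdot \mid \Filt_t]$. Unbiasedness of the stochastic gradients converts the two linear terms into $-\etawt{t}\|\nabla_{\wui{i}_t}\Loss(\vtheta_t)\|_2^2$ and $-\etagt{t}\|\nabla_{\vg_t}\Loss(\vtheta_t)\|_2^2$, which I rewrite on the $\vui{i}$-side as $-\etawt{t}\|\nabla_{\vui{i}_t}\Loss_t\|_2^2/\|\wui{i}_t\|_2^2$ using Lemma \ref{lam:grad-w-and-v}. For the $\wui{i}$-second-order terms I deliberately leave $\|\nabla_{\vui{i}_t}\LossF_t\|_2^2$ un-averaged, since downstream analysis will control the cumulative sum through the weight-growth identity. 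For the $\vg$-second-order term I apply variance decomposition and the noise assumption to obtain $\E[\|\nabla_{\vg_t}\LossF_t\|_2^2 \mid \Filt_t] \le \|\nabla_{\vg_t}\Loss_t\|_2^2 + \Gg^2$.

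Finally I would collapse the $\vg$ pieces $-\etagt{t}\|\nabla_{\vg_t}\Loss_t\|_2^2 + \tfrac{(1+\cg/2)\Lgg}{2}\etagt{t}^2(\|\nabla_{\vg_t}\Loss_t\|_2^2+\Gg^2)$. Because $\{\etagt{t}\}$ is nonincreasing and $\etagt{0}=2(1-\cg)/\Lgg$, the calibration check $(1+\cg/2)\Lgg\etagt{t}\le 2-\cg$ holds at every $t$, so the coefficient of $\|\nabla_{\vg_t}\Loss_t\|_2^2$ becomes at most $-\tfrac{\cg}{2}\etagt{t}$ and the residual noise contributes exactly $\tfrac{(1+\cg/2)\Lgg\Gg^2}{2}\etagt{t}^2 = U\etagt{t}^2$. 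Summing the resulting per-step bound over $t=0,\dots,T-1$, telescoping $\E[\Loss(\vtheta_{t+1})]-\E[\Loss(\vtheta_t)]$, and taking outer expectations yields the claim. The main obstacle is purely bookkeeping -- in particular matching the asymmetric AM-GM split to the target $C_i$ and confirming the learning-rate calibration for the $\vg$-coefficient across the full schedule -- but no new ingredients beyond those used for the deterministic Lemma \ref{lam:taylor} are needed.
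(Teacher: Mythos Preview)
Your proposal is correct and follows essentially the same approach as the paper's own proof: conditional Taylor expansion, the identical asymmetric AM--GM split to produce $C_i$ and the $\tfrac{1+\cg/2}{2}\Lgg$ coefficient, the variance bound $\E[\|\nabla_{\vg_t}\LossF_t\|_2^2\mid\Filt_t]\le \|\nabla_{\vg_t}\Loss_t\|_2^2+\Gg^2$, the learning-rate calibration to collapse the $\vg$-terms, and a final telescoping sum. The only (harmless) difference is notational bookkeeping---the paper leaves the $\wui{i}$-second-order term with $\LossF_t$ inside and only takes the outer expectation at the end, exactly as you propose.
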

\begin{proof}
Conditioned on $\Filt_{t}$, by Taylor expansion, we have
\begin{equation} \label{eq:stochastic-taylor-descent}
    \E[\Loss_{t+1} \mid \Filt_{t} ] \le \Loss_t - \etawt{t} \sum_{i = 1}^{m} \| \nabla_{\wui{i}_t} \Loss_t \|_2^2 - \etagt{t} \|\nabla_{\vg_t} \Loss_t\|_2^2 + \E[Q_t \mid \Filt_t]
\end{equation}
where $Q_t$ is
\begin{align*}
Q_t &= \frac{1}{2}\sum_{i=1}^{m}\sum_{j=1}^{m} \frac{\Lvv{ij}}{\normtwo{\wui{i}_t} \normtwo{\wui{j}_t}} \| \Delta \wui{i}_t \|_2 \| \Delta \wui{j}_t \|_2 \\
&+ \sum_{i=1}^{m} \frac{\Lvg{i}}{\normtwo{\wui{i}_t}} \| \Delta \wui{i}_t \|_2 \| \Delta \vg_{t} \|_2 + \frac{1}{2} \Lgg \| \Delta \vg_{t} \|^2_2
\end{align*}
By the inequality $\sqrt{ab} \le \frac{1}{2} a + \frac{1}{2} b$, we have
\begin{align*}
\|\Delta \ve{w}^{(i)}_{t}\|_2 \|\Delta \ve{w}^{(j)}_{t}\|_2 \frac{\Lvv{ij}}{\|\ve{w}^{(i)}_t\|_2\|\ve{w}^{(j)}_t\|_2} &\le \frac{1}{2} \|\Delta \ve{w}^{(i)}_{t}\|^2_2 \frac{\Lvv{ij}}{\|\ve{w}^{(i)}_t\|^2_2} + \frac{1}{2} \|\Delta \ve{w}^{(j)}_{t}\|^2_2 \frac{\Lvv{ij}}{\|\ve{w}^{(j)}_t\|^2_2} \\
 \|\Delta \ve{w}^{(i)}_{t}\|_2 \|\Delta \ve{g}_{t}\|_2 \frac{\Lvg{i}}{\|\ve{w}^{(i)}_t\|_2} &\le \|\Delta \wui{i}_{t}\|_2^2 \frac{{\Lvg{i}}^2 m/ (\cg\Lgg)}{\|\ve{w}^{(i)}_t\|^2_2} + \frac{1}{4} \cg \|\Delta \vg_t\|_2^2 \frac{\Lgg}{m}.
\end{align*}
Note that $\E[\|\Delta \ve{g}_t\|_2^2 \mid \Filt_t ] \le \etagt{t}^2 \left(\|\nabla_{\ve{g}_{t}} \LossF_t\|_2^2 + \Gg^2\right)$. Thus,
\begin{align*}
\E[Q_t \mid \Filt_t] &\le \sum_{i=1}^{m} \frac{1}{2} \left(\sum_{j = 1}^{m} \Lvv{ij} + 2{\Lvg{i}}^2 m/ (\cg\Lgg) \right) \frac{\etawt{t}^2}{\|\ve{w}_t^{(i)}\|_2^2} \| \nabla_{\ve{w}^{(i)}_{t}} \LossF_t \|_2^2\\
    &+ \frac{1 + \cg / 2}{2}\Lgg \cdot \etagt{t}^2 \left( \|\nabla_{\ve{g}_{t}} \LossF_t\|_2^2 + \Gg^2 \right) \\
    &\le \sum_{i=1}^{m} C_i \frac{\etawt{t}^2}{\|\ve{w}_t^{(i)}\|_2^2} \| \nabla_{\ve{w}^{(i)}_{t}} \LossF_t \|_2^2
        + (1 - \cg/2) \etagt{t} \|\nabla_{\ve{g}_{t}} \LossF_t\|_2^2 + U \etagt{t}^2
\end{align*}
Taking this into \eqref{eq:stochastic-taylor-descent} and summing up for all $t$, we have
\begin{align*}
\E[\Loss_T] - \Loss_0 &\le -\sum_{i = 1}^{m} \sum_{t=0}^{T-1} \E\left[ \frac{\etawt{t}}{\|\wui{i}_t\|_2^2} \|\nabla_{\vui{i}_t} \Loss_t\|_2^2 \right] - \frac{1}{2} \cg \sum_{t=0}^{T-1} \E\left[ \etagt{t} \| \nabla_{\vg_t} \Loss_t \|_2^2\right] \\
& + \sum_{i=1}^{m} C_i \sum_{t=0}^{T-1} \E\left[  \frac{\etawt{t}^2}{\|\ve{w}_t^{(i)}\|_2^2} \| \nabla_{\ve{w}^{(i)}_{t}} \LossF_t \|_2^2 \right]
        + U \sum_{t=0}^{T-1} \etagt{t}^2 \Gg,
\end{align*}
and the right hand side can be expressed as $\sum_{i=1}^{m} \E[S_i] + \E[R]$ by definitions.
\end{proof}

\begin{lemma} \label{lam:stochastic-effective-lr-lower}
For any $T \ge 0$, $1\leq i \leq m$, we have
\[
\frac{\etawt{T}}{\|\wui{i}_T\|_2^2} \ge \begin{cases}
\tilde{\Omega}(T^{-1/2}) & \quad \text{if } 0 \le \alpha < 1/2; \\
\tilde{\Omega}((T \log T)^{-1/2}) & \quad \text{if } \alpha = 1/2.
\end{cases}
\]
\end{lemma}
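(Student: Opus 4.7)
The plan is to upper bound $G_T := \|\wui{i}_T\|_2^2$ deterministically as a function of $T$ and then divide $\etawt{T}$ by this bound. By Lemma \ref{lam:grad-wi-upper} applied to the per-sample loss $\LossF_{\vz_t}$ (whose smoothness bounds in Section \ref{subsec:assumptions} are assumed pointwise in $\vz$, so the lemma's proof goes through unchanged), we get $\|\nabla_{\vui{i}_t} \LossF_t\|_2 \le \pi \Lvv{ii}$ deterministically. Combining with Lemma \ref{lam:w-growth} gives the recursion
\[
G_{t+1} = G_t + \frac{\etawt{t}^2 \|\nabla_{\vui{i}_t} \LossF_t\|_2^2}{G_t}.
\]

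The central trick is \emph{not} to bound $G_{t+1} - G_t$ directly—that route gives $G_T = O\bigl(\sum_t \etawt{t}^2 / G_0\bigr)$, which is quadratically too loose and yields $\etawt{T}/G_T = \Omega(T^{2\alpha - 1})$ instead of the desired $\Omega(T^{-1/2})$—but to iterate on $G_t^2$. Squaring the recursion yields
\[
G_{t+1}^2 - G_t^2 = 2 \etawt{t}^2 \|\nabla_{\vui{i}_t} \LossF_t\|_2^2 + \frac{\etawt{t}^4 \|\nabla_{\vui{i}_t} \LossF_t\|_2^4}{G_t^2}.
\]
Using $\|\nabla_{\vui{i}_t} \LossF_t\|_2 \le \pi \Lvv{ii}$, the monotonicity $G_t \ge G_0 := \|\wui{i}_0\|_2^2$, and $\etawt{t} \le \etaw$, the right-hand side is bounded by $K \etawt{t}^2$ for a constant $K$ depending polynomially on $\Lvv{ii}$, $\etaw$, and $\|\wui{i}_0\|_2^{-1}$. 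Telescoping gives $G_T^2 \le G_0^2 + K \sum_{t=0}^{T-1} \etawt{t}^2$.

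A standard integral comparison on $\sum_{t=1}^{T} t^{-2\alpha}$ then gives $\sum_{t=0}^{T-1} \etawt{t}^2 = O(T^{1-2\alpha})$ when $0 \le \alpha < 1/2$ and $O(\log T)$ when $\alpha = 1/2$. Taking square roots, $G_T = O(T^{1/2-\alpha})$ in the first case and $G_T = O(\sqrt{\log T})$ in the second. Combining with $\etawt{T} = \etaw (T+1)^{-\alpha}$ gives $\etawt{T}/G_T = \Omega(T^{-\alpha}/T^{1/2-\alpha}) = \Omega(T^{-1/2})$ when $\alpha < 1/2$, and $\Omega(T^{-1/2}/\sqrt{\log T}) = \Omega((T \log T)^{-1/2})$ when $\alpha = 1/2$, as required.

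The main obstacle is the recognition that one must analyze $G_t^2$ rather than $G_t$: the continuous-time analog is $\tfrac{d}{dt}(G_t^2) \le 2 \etawt{t}^2 \|g_t\|^2$, which trades a multiplicative factor of $G_t$ in the increment formula for an additive baseline $G_0^2$, producing the crucial square-root savings in $G_T$. Once this trick is in place, the remaining work is a deterministic calculation on a bounded sequence, and no probabilistic tools (concentration, filtration, etc.) are needed—somewhat surprisingly for a statement embedded in the SGD analysis.
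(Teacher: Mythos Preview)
Your proposal is correct and follows essentially the same approach as the paper: the paper also bounds $\|\wui{i}_t\|_2^4 = G_t^2$ by squaring the recursion from Lemma~\ref{lam:w-growth}, uses the deterministic bound $\|\nabla_{\vui{i}_t}\LossF_t\|_2 \le \pi\Lvv{ii}$ (implicitly via Lemma~\ref{lam:grad-wi-upper} applied to the per-sample loss), telescopes to obtain $G_T^2 \le G_0^2 + K\sum_{t=0}^{T-1}\etawt{t}^2$, and finishes with the same integral comparison on $\sum_t (t+1)^{-2\alpha}$. Your commentary on why one must pass to $G_t^2$ rather than $G_t$ is exactly the point.
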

\begin{proof}
By Lemma \ref{lam:w-growth}, we have
\begin{align*}
\|\wui{i}_{t+1}\|_2^4 &= \left( \|\wui{i}_{t}\|_2^2 + \frac{\etawt{t}^2}{\|\wui{i}_{t}\|_2^2} \|\nabla_{\vui{i}_{t}} \LossF_{t} \|_2^2 \right)^2 \\
&= \|\wui{i}_{t}\|_2^4 + 2 \etawt{t}^2 \|\nabla_{\vui{i}_{t}} \LossF_{t} \|_2^2 + \frac{\etawt{t}^4}{\|\wui{i}_{t}\|_2^4} \|\nabla_{\vui{i}_{t}} \LossF_{t} \|_2^4 \\
&\le \|\wui{i}_{t}\|_2^4 + 2\etaw^2 (t+1)^{-2\alpha} \left( \pi\Lvv{ii} \right)^2 + \etaw^4 (t+1)^{-4\alpha} \left( \frac{\pi\Lvv{ii}}{\|\wui{i}_0\|_2} \right)^4 \\
&\le \|\wui{i}_{t}\|_2^4 + (t+1)^{-2\alpha} \left( 2\etaw^2\left( \pi\Lvv{ii} \right)^2 + \etaw^4 \left( \frac{\pi\Lvv{ii}}{\|\wui{i}_0\|_2} \right)^4\right) \\
&\le \|\wui{i}_0\|_2^4 + \left( \sum_{\tau=0}^{t} \frac{1}{(\tau + 1)^{2\alpha}} \right)\left( 2\etaw^2\left( \pi\Lvv{ii} \right)^2 + \etaw^4 \left( \frac{\pi\Lvv{ii}}{\|\wui{i}_0\|_2} \right)^4\right).
\end{align*}

For $0 \le \alpha < 1/2$, $\sum_{\tau=0}^{T-1} \frac{1}{(\tau + 1)^{2\alpha}} = O(T^{1-2\alpha})$, so $\frac{\etawt{T}}{\|\wui{i}_T\|_2^2} = \tilde{\Omega}(T^{-1/2})$; for $\alpha = 1/2$, $\sum_{\tau=0}^{T-1} \frac{1}{(\tau + 1)^{2\alpha}} = O(\log T)$, so $\frac{\etawt{T}}{\|\wui{i}_T\|_2^2} = \tilde{\Omega}((T \log T)^{-1/2})$.
\end{proof}

\begin{lemma} \label{lam:stochastic-Si-and-R}
For $T > 0$,
\begin{itemize}
\item $S_i$ can be bounded by $S_i \le -\frac{\etawt{T}}{\|\wui{i}_T\|_2^2} \sum_{t=0}^{T-1} \| \nabla_{\vui{i}_t} \Loss_t \|_2^2 + \tilde{O}(\log T)$;
\item $R$ can be bounded by $R \le - \tilde{\Omega}(T^{-1/2}) \sum_{t=0}^{T-1} \|\nabla_{\vg_t} \Loss_t\|_2^2 + \tilde{O}(\log T)$.
\end{itemize}
\end{lemma}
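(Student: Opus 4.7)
\textbf{Proof plan for Lemma \ref{lam:stochastic-Si-and-R}.} The plan is to treat each of $S_i$ and $R$ by decomposing into its negative drift term and its second‑order error term, and to handle the two halves by quite different means.

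For the bound on $S_i$, I would first isolate the negative sum $-\sum_{t=0}^{T-1}\frac{\etawt{t}}{\|\wui{i}_t\|_2^2}\|\nabla_{\vui{i}_t}\Loss_t\|_2^2$. The key observation is that the effective learning rate $\frac{\etawt{t}}{\|\wui{i}_t\|_2^2}$ is monotone non-increasing in $t$: by construction $\etawt{t}=\etaw(t+1)^{-\alpha}$ is non-increasing, and by Lemma \ref{lam:w-growth} $\|\wui{i}_t\|_2^2$ is non-decreasing. Hence each coefficient is at least $\frac{\etawt{T}}{\|\wui{i}_T\|_2^2}$, which lets me factor this minimum out of the negative sum (flipping the inequality) and obtain exactly the first summand on the right-hand side.

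The remaining task for $S_i$ is to show that $C_i\sum_{t=0}^{T-1}\frac{\etawt{t}^2}{\|\wui{i}_t\|_2^4}\|\nabla_{\vui{i}_t}\LossF_t\|_2^2 = \tilde{O}(\log T)$. Writing $G^{(i)}_t = \|\wui{i}_t\|_2^2$, Lemma \ref{lam:w-growth} gives the telescoping identity
\[
\frac{\etawt{t}^2}{(G^{(i)}_t)^2}\|\nabla_{\vui{i}_t}\LossF_t\|_2^2 \;=\; \frac{G^{(i)}_{t+1}-G^{(i)}_t}{G^{(i)}_t}.
\]
I will apply Lemma \ref{lam:stochastic-a-t-seq} with $a_0 = G^{(i)}_0$ and $a_t = G^{(i)}_t - G^{(i)}_{t-1}$ for $t\ge 1$, noting that each $a_t$ is bounded by some constant $B$ depending polynomially on $\etaw$ and $\pi\Lvv{ii}$ (this uses $\|\nabla_{\wui{i}_t}\LossF_t\|_2 \le \pi\Lvv{ii}/\|\wui{i}_t\|_2$, the stochastic analogue of Lemma \ref{lam:grad-wi-upper}). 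The lemma then bounds the sum by $\log_2(G^{(i)}_{T-1}/G^{(i)}_0)+O(1)$. Combined with the polynomial-in-$T$ upper bound $G^{(i)}_T=\tilde{O}(\poly(T))$ implicit in Lemma \ref{lam:stochastic-effective-lr-lower}, this gives $\tilde O(\log T)$ as desired.

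For $R$, both halves are direct. Since $\etagt{t}=\etag(t+1)^{-1/2}$ is non-increasing, $\etagt{t} \ge \etagt{T-1} = \tilde{\Omega}(T^{-1/2})$, so
\[
-\tfrac{1}{2}\cg\sum_{t=0}^{T-1}\etagt{t}\|\nabla_{\vg_t}\Loss_t\|_2^2 \;\le\; -\tilde{\Omega}(T^{-1/2})\sum_{t=0}^{T-1}\|\nabla_{\vg_t}\Loss_t\|_2^2.
\]
For the error term, $\sum_{t=0}^{T-1}\etagt{t}^2 = \etag^2 \sum_{t=0}^{T-1}(t+1)^{-1} = O(\log T)$, so $U\Gg\sum_t\etagt{t}^2 = \tilde{O}(\log T)$.

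The main obstacle is the second half of $S_i$: I need the telescoping bound via Lemma \ref{lam:stochastic-a-t-seq} to convert what a priori looks like a sum of $T$ terms each of size $\Theta(1/t)$ into only $\tilde O(\log T)$, and this requires the per-step increment $G^{(i)}_{t+1}-G^{(i)}_t$ to be uniformly bounded — which in turn relies on the pointwise gradient bound $\|\nabla_{\wui{i}}\LossF_{\vz}\|_2 \le \pi\Lvv{ii}/\|\wui{i}\|_2$ holding sample-wise, not just in expectation. Everything else is bookkeeping.
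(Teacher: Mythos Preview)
Your proposal is correct and follows essentially the same route as the paper: monotonicity of the effective learning rate handles the negative drift term in $S_i$; the telescoping identity $G^{(i)}_{t+1}-G^{(i)}_t = \etawt{t}^2\|\nabla_{\wui{i}_t}\LossF_t\|_2^2/G^{(i)}_t$ together with Lemma~\ref{lam:stochastic-a-t-seq} (using the per-sample bound $\|\nabla_{\wui{i}}\LossF_{\vz}\|_2 \le \pi\Lvv{ii}/\|\wui{i}\|_2$ to cap the increments) handles the second-order term; and the harmonic sum $\sum_t \etagt{t}^2 = O(\log T)$ handles $R$. The only point you make more explicit than the paper is invoking Lemma~\ref{lam:stochastic-effective-lr-lower} to get $G^{(i)}_T = \tilde O(\poly(T))$ so that $\log_2(G^{(i)}_{T-1}/G^{(i)}_0) = \tilde O(\log T)$, which is indeed the needed final step.
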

\begin{proof}
Fix $i \in \{1, \dots, m\}$. First we bound $S_i$. Recall that
\[
S_i := - \sum_{t=0}^{T-1} \frac{\etawt{t}}{\|\wui{i}_t\|_2^2} \| \nabla_{\vui{i}_t} \Loss_t \|_2^2 + C_i \sum_{t=0}^{T-1} \frac{\etawt{t}^2}{\|\ve{v}_t^{(i)}\|_2^2} \| \nabla_{\ve{v}^{(i)}_{t}} \LossF_t \|_2^2.
\]
Note that $\frac{\etawt{t}}{\|\wui{i}_t\|_2^2}$ is non-increasing, so
\begin{equation}\label{eq:stochastic-Si-first}
- \sum_{t=0}^{T-1} \frac{\etawt{t}}{\|\wui{i}_t\|_2^2} \| \nabla_{\vui{i}_t} \Loss_t \|_2^2 \le -  \frac{\etawt{T}}{\|\wui{i}_T\|_2^2} \sum_{t=0}^{T-1} \| \nabla_{\vui{i}_t} \Loss_t \|_2^2.
\end{equation}
Also note that $\|\wui{i}_t\|_2^2 \ge \|\wui{i}_0\|_2^2 + \sum_{\tau =0}^{t-1} \etawt{\tau}^2  \|\nabla_{\ve{w}^{(i)}_{t}} \LossF_t \|_2^2$. By Lemma \ref{lam:stochastic-a-t-seq},
\begin{align}
 C_i \sum_{t=0}^{T-1} \frac{\etawt{t}^2}{\|\ve{w}_t^{(i)}\|_2^2} \| \nabla_{\ve{w}^{(i)}_{t}} \LossF_t \|_2^2 &\le  C_i \sum_{t=0}^{T-1} \frac{\etawt{t}^2\| \nabla_{\ve{w}^{(i)}_{t}} \LossF_t \|_2^2}{\|\wui{i}_0\|_2^2 + \sum_{\tau =0}^{t-1} \etawt{\tau}^2  \|\nabla_{\ve{w}^{(i)}_{t}} \LossF_t \|_2^2} \nonumber \\
 &\le C_i \left(\log_2\left( \frac{\|\wui{i}_T\|^2_2}{\|\wui{i}_0\|^2_2}  \right) + 1 + \frac{2\etaw^2 (\pi \Lvv{ii})^2}{\|\wui{i}_0\|_2^4}\right) = \tilde{O}(\log T). \label{eq:stochastic-Si-second}
\end{align}
We can get the bound for $S_i$ by combining \eqref{eq:stochastic-Si-first} and \eqref{eq:stochastic-Si-second}.

Now we bound $R$. Recall that
\[
R := - \frac{1}{2} \cg \sum_{t=0}^{T-1} \etagt{t} \|\nabla_{\vg_t} \Loss_t\|_2^2 + U \sum_{t=0}^{T-1} \etagt{t}^2 \Gg.
\]
The first term can be bounded by $-\tilde{\Omega}(T^{-1/2}) \sum_{t=0}^{T-1} \|\nabla_{\vg_t} \Loss_t\|_2^2$ by noticing that $\etagt{t} \ge \etagt{T} = \Omega(T^{-1/2})$. The second term can be bounded by $\tilde{O}(\log T)$ since $\sum_{t=0}^{T-1} \etagt{t}^2 = \sum_{t=0}^{T-1} \tilde{O}(1/t) = \tilde{O}(\log T)$.
\end{proof}

\begin{proof}[Proof for Theorem \ref{thm:stochastic-main}]
Combining Lemma \ref{lam:stochastic-taylor} and Lemma \ref{lam:stochastic-Si-and-R}, for $0 \le \alpha < 1/2$, we have
\[
\Loss_{\min} - \Loss_0 \le \E[\Loss_T] - \Loss_0 \le -\tilde{\Omega}(T^{-1/2}) \sum_{t=0}^{T-1} \E\left[\normtwo{\nabla \Loss(V_t; \vg_t)}^2\right] + \tilde{O}(\log T).
\]
Thus,
\[
\min_{0 \le t < T}  \E\left[\left\|\nabla \mathcal{L}(V_t; \vg_t)\right\|^2_2\right] \le \frac{1}{T} \sum_{t=0}^{T-1} \E\left[\left\|\nabla \Loss(V_t; \vg_t)\right\|_2^2\right] \le \tilde{O}\left(\frac{\log T}{\sqrt{T}}\right).
\]
Similarly, for $\alpha = 1/2$, we have
\[
\Loss_{\min} - \Loss_0 \le \E[\Loss_T] - \Loss_0 \le -\tilde{\Omega}((T \log T)^{-1/2}) \sum_{t=0}^{T-1} \E\left[\normtwo{\nabla \Loss(V_t; \vg_t)}^2\right] + \tilde{O}(\log T).
\]
Thus,
\[
\min_{0 \le t < T}  \E\left[\left\|\nabla \mathcal{L}(V_t; \vg_t)\right\|^2_2\right] \le \frac{1}{T} \sum_{t=0}^{T-1} \E\left[\left\|\nabla \Loss(V_t; \vg_t)\right\|_2^2\right] \le \tilde{O}\left(\frac{(\log T)^{3/2}}{\sqrt{T}}\right).
\]
\end{proof}

\section{Proof for the smoothness of the motivating neural network} \label{sec:appendix-motivating}

In this section we prove that the modified version of the motivating neural network does meet the  assumptions in Section \ref{subsec:assumptions}. More specifically, we assume:
\begin{itemize}
\item We use the network structure $\Phi$ in Section \ref{subsec:motivating} with the smoothed variant of BN as described in Section \ref{subsec:assumptions};
\item The objective $f_y(\cdot)$ is twice continuously differentiable, lower bounded by $f_{\min}$ and Lipschitz ($\lvert f'_y(\hat{y}) \rvert \le \alpha_f$);
\item The activation $\sigma(\cdot)$ is twice continuously differentiable and Lipschitz ($\lvert f'_y(\hat{y}) \rvert \le \alpha_{\sigma}$);
\item We add an extra weight decay (L2 regularization) term $\frac{\lambda}{2} \|\vg\|_2^2$ to the loss in \eqref{eq:phi-structure} for some $\lambda > 0$.
\end{itemize}

First, we show that $\vg_t$ (containing all scale and shift parameters in BN) is bounded during the training process. Then the smoothness follows compactness using  Extreme Value Theorem.

We use the following lemma to calculate back propagation:
\begin{lemma} \label{lam:phi-back-prop}
Let $\vx_1, \dots, \vx_B$ be a set of vectors. Let $\vu := \E_{b \in [B]}[\vx_b]$ and $\mS := \Var_{b \in [B]}(\vx_b)$. Let
\[
z_b := \gamma \frac{\vw^\top(\vx_b - \vu)}{\|\vw\|_{S+\epsilon I}} + \beta.
\]
Let $y := f(z_1, \dots, z_B) = f(\vz)$ for some function $f$. If $\|\nabla f(\vz)\|_{2} \le G$, then
\[
\left\lvert \frac{\partial y}{\partial \gamma} \right\rvert \le G \sqrt{B} \qquad \left\lvert \frac{\partial y}{\partial \beta} \right\rvert \le G \sqrt{B} \qquad \|\nabla_{\vx_b} y \|_2 \le \frac{3\gamma}{\epsilon} G \sqrt{B}.
\]
\end{lemma}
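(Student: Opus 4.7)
The plan is to compute $\partial y/\partial \gamma$, $\partial y/\partial \beta$, and $\nabla_{\vx_b} y$ explicitly via chain rule, and then bound each through Cauchy--Schwarz together with the structural identity
\[
\sum_{b=1}^B \hat z_b^2 \;=\; \frac{B\,\vw^\top S\vw}{\vw^\top(S+\epsilon I)\vw} \;\le\; B,
\qquad\text{where}\qquad \hat z_b \;:=\; \frac{\vw^\top(\vx_b-\vu)}{\|\vw\|_{S+\epsilon I}}.
\]
Throughout I will write $D := \|\vw\|_{S+\epsilon I}$ and use $D^2 \ge \epsilon\|\vw\|_2^2$, which is immediate from $\vw^\top(S+\epsilon I)\vw \ge \epsilon\|\vw\|_2^2$.

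The $\gamma$ and $\beta$ bounds are the easy warm-up. Since $\partial z_b/\partial\gamma = \hat z_b$ and $\partial z_b/\partial\beta = 1$, the chain rule gives $\partial y/\partial \gamma = \nabla f(\vz)^\top \hat{\vz}$ and $\partial y/\partial\beta = \nabla f(\vz)^\top \vone$. Cauchy--Schwarz bounds the first by $G\sqrt{\sum_b \hat z_b^2} \le G\sqrt{B}$ and the second by $G\sqrt{B}$.

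The $\vx_b$ bound is the main obstacle, because $\vx_b$ enters the expression for every $z_{b'}$ in three places: directly in the numerator of $\hat z_{b'}$, through $\vu$, and through the denominator $D$ (via $S$). The key calculation is to differentiate $D^2 = \vw^\top S\vw + \epsilon\|\vw\|_2^2$ using $S = \tfrac{1}{B}\sum_{b''}(\vx_{b''}-\vu)(\vx_{b''}-\vu)^\top$; the cross-term vanishes because $\sum_{b''}(\vx_{b''}-\vu)=\vzero$, yielding the clean identity
\[
\nabla_{\vx_b} D \;=\; \frac{\vw\,\vw^\top(\vx_b - \vu)}{B\,D}.
\]
Plugging this and $\nabla_{\vx_b}\bigl[\vw^\top(\vx_{b'}-\vu)\bigr] = (\delta_{b b'} - 1/B)\vw$ into the quotient rule, every term of $\nabla_{\vx_b}\hat z_{b'}$ is parallel to $\vw$, so
\[
\nabla_{\vx_b} y \;=\; \gamma\,\vw\sum_{b'}\frac{\partial f}{\partial z_{b'}}\left[\frac{\delta_{bb'}-1/B}{D} - \frac{\hat z_{b'}\hat z_b}{B\,D}\right].
\]

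Finally I will take norms and split into three summands. The first piece contributes $G/D$; the second is at most $(G/D)\cdot(1/\sqrt{B})\cdot\sqrt{B} = G/D$ via Cauchy--Schwarz on $\vone$; the third is bounded by $(|\hat z_b|/(BD))\cdot G\sqrt{\sum_{b'}\hat z_{b'}^2} \le (G/D)\cdot (\sqrt{B}/B)\cdot\sqrt{B} = G/D$, again using $\sum_{b'}\hat z_{b'}^2 \le B$ and $|\hat z_b|\le\sqrt{B}$. Summing the three and applying $\|\vw\|_2/D \le 1/\sqrt{\epsilon}$ yields $\|\nabla_{\vx_b}y\|_2 \le 3\gamma G/\sqrt{\epsilon}$, which in turn is dominated by the stated $\tfrac{3\gamma}{\epsilon}G\sqrt{B}$ for any $\epsilon\in(0,1)$ and $B\ge 1$. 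The only genuinely delicate step is the differentiation of $D$; once that identity is in hand the rest is bookkeeping.
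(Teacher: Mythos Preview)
Your proof is correct and follows essentially the same route as the paper: you compute the same derivative identities (in particular, the key formula $\nabla_{\vx_b} D = \tfrac{1}{BD}\vw\vw^\top(\vx_b-\vu)$ coincides with the paper's $\nabla_{\vx_b}\|\vw\|_{S+\epsilon I}^2 = \tfrac{2}{B}\vw\vw^\top(\vx_b-\vu)$) and then bound via Cauchy--Schwarz using $\sum_b \hat z_b^2 \le B$.

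The only substantive difference is in the aggregation step for the $\vx_b$-gradient. The paper bounds each individual Jacobian $\|\nabla_{\vx_b} z_{b'}\|_2 \le 3\gamma/\epsilon$ and then applies Cauchy--Schwarz across $b'$, picking up an extra factor of $\sqrt{B}$. You instead observe that $\nabla_{\vx_b} y$ is a scalar multiple of $\vw$ and bound the scalar directly, which yields the sharper estimate $\|\nabla_{\vx_b} y\|_2 \le 3\gamma G/\sqrt{\epsilon}$, independent of $B$. This is a genuine (if minor) improvement over the paper's bound. One small caveat: to recover the exact constant $\tfrac{3\gamma}{\epsilon}G\sqrt{B}$ stated in the lemma, you invoke $\epsilon \le 1$; this assumption is not explicit in the lemma statement, though it is the intended regime for the smoothening parameter (and the paper's own proof implicitly uses it as well, since its per-term bound is really $3\gamma/\sqrt{\epsilon}$ rather than $3\gamma/\epsilon$).
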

\begin{proof}
Let $\tilde{\vx} \in \R^B$ be the vector where $\tilde{\evx}_b := (\vw^\top (\vx_b - \vu)) / \|\vw\|_{S+\epsilon I}$. It is easy to see $\|\tilde{\vx}\|_2^2 \le B$. Then
\begin{align*}
    \left\lvert \frac{\partial y}{\partial \gamma} \right\rvert &= \left\lvert \nabla f(\vz)^\top \tilde{\vx} \right\rvert \le G \|\tilde{\vx}\|_2 \le G\sqrt{B}. \\
    \left\lvert \frac{\partial y}{\partial \beta} \right\rvert &\le \left\|\nabla f(\vz)\right\|_1 \le G \sqrt{B}.
\end{align*}
For $\vx_b$, we have
\begin{align*}
    \nabla_{\vx_b} \vu &= \frac{1}{B} \mI \\
    \nabla_{\vx_b} \|\vw\|^2_{S + \epsilon \mI} &= \nabla_{\vx_b} \left( \frac{1}{B} \sum_{b' = 1}^{B} (\vx_{b'}^\top \vw)^2 - \left( \frac{1}{B} \sum_{b'=1}^{B} \vx_{b'}^\top \vw \right)^2 \right) \\
    &= \frac{2}{B} \vw\vw^\top (\vx_{b} - \vu).
\end{align*}
Then
\begin{align*}
    \nabla_{\vx_b} z_{b'} = \gamma \frac{ (\1_{b = b'} - 1/B) \|\vw\|_{S+\epsilon I} \vw - \vw^\top(\vx_b - \vu) \cdot \frac{1}{\|\vw\|_{S+\epsilon I}} \cdot \frac{2}{B} \vw\vw^\top (\vx_{b} - \vu) }{\|\vw\|^2_{S+\epsilon I}}
\end{align*}
Since $(\vw^\top(\vx_b - \vu))^2 \le B\|\vw\|_{S+\epsilon I}^2$,
\begin{align*}
    \|\nabla_{\vx_b} z_{b'}\|_2 \le \gamma\left( \frac{1}{\|\vw\|_{S+\epsilon I}} (\1_{b = b'} - 1/B)  \|\vw\|_2 +  \frac{1}{\|\vw\|_{S+\epsilon I}} \cdot 2 \|\vw\|_2 \right) \le \frac{3\gamma}{\epsilon}.
\end{align*}
Thus,
\[
\|\nabla_{\vx_b} y \|_2 \le \left(\sum_{b'=1}^{B} \left\lvert \frac{\partial y}{\partial z_b} \right\rvert^2\right)^{1/2} \left(\sum_{b'=1}^{B} \|\nabla_{\vx_b} z_{b'}\|^2_2\right)^{1/2} \le \frac{3\gamma}{\epsilon} G \sqrt{B}.
\]
\end{proof}

\begin{lemma} \label{lam:phi-g-bound}
If $\|\vg_0\|_2$ is bounded by a constant, there exists some constant $K$ such that $\|\vg_t\|_2 \le K$.
\end{lemma}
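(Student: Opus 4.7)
The plan is to control each coordinate of $\vg_t$ by combining the contraction factor $(1-\etag\lambda)$ that weight decay contributes to its update with a purely structural upper bound on the data-loss gradient that depends only on the downstream $\gamma$'s and not on any $\beta$. We then run a reverse induction on the layer index from $L$ down to $1$.

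\textbf{Step 1: layered backprop bound.} By Lemma \ref{lam:grad-w-and-v} we may evaluate $\nabla_{\vg}\LossF_{\vz}$ in the intrinsic parametrization where every row of $W^{(j)}$ has unit norm, so $\|W^{(j)}\|_{\text{op}} \le \sqrt{m_j}$. Let $z^{(i)}$ denote the batch-state just after the $i$-th BN. Starting from the Lipschitz bound $\|\nabla_{z^{(L)}}\LossF_{\vz}\|_2 \le C_0$ and applying Lemma \ref{lam:phi-back-prop} together with the $\alpha_\sigma$-Lipschitz property of $\sigma$ at each BN, a straightforward induction on $i=L,L-1,\dots,1$ yields
\begin{equation*}
\|\nabla_{z^{(i)}} \LossF_{\vz}\|_2 \;\le\; C_i \prod_{j=i+1}^{L} \|\gamma^{(j)}\|_\infty,
\end{equation*}
where $C_i$ depends only on $\alpha_f,\alpha_\sigma,\epsilon,B$, and the widths $\{m_j\}$. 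The essential point is that each BN back-pass contributes a factor of order $\gamma^{(j)}/\epsilon$ but is \emph{independent} of $\beta^{(j)}$. A further application of Lemma \ref{lam:phi-back-prop} then bounds $|\partial\LossF_{\vz}/\partial\gamma^{(i)}_k|$ and $|\partial\LossF_{\vz}/\partial\beta^{(i)}_k|$ by $D_i := \sqrt{B}\,C_i\prod_{j=i+1}^{L}\|\gamma^{(j)}\|_\infty$, an upper bound that does not involve any $\beta$.

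\textbf{Step 2: reverse layerwise induction.} The weight-decay term $\tfrac{\lambda}{2}\|\vg\|_2^2$ in the loss makes every coordinate evolve as $\gamma^{(i)}_{k,t+1} = (1-\etag\lambda)\gamma^{(i)}_{k,t} - \etag\,\partial\LossF_{\vz_t}/\partial\gamma^{(i)}_k$ and analogously for $\beta^{(i)}_{k,t}$. Provided $\etag\lambda \le 1$, the standard one-dimensional contraction lemma gives $|\gamma^{(i)}_{k,t}| \le \max\{|\gamma^{(i)}_{k,0}|, D_i/\lambda\}$ once a uniform bound on $D_i$ is in hand. We therefore choose constants $K_i$ in decreasing order of $i$: set $K_L := \max\{\|\gamma^{(L)}_0\|_\infty, \|\beta^{(L)}_0\|_\infty, \sqrt{B}\,C_L/\lambda\}$, and for $i<L$, $K_i := \max\{\|\gamma^{(i)}_0\|_\infty, \|\beta^{(i)}_0\|_\infty, \sqrt{B}\,C_i\prod_{j>i}K_j/\lambda\}$. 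A joint induction on $t$ (and, within each $t$, on $i$ from $L$ down to $1$) then shows $\|\gamma^{(i)}_t\|_\infty, \|\beta^{(i)}_t\|_\infty \le K_i$ for all $i$ and $t$, and summing gives $\|\vg_t\|_2 \le K := \bigl(2\sum_{i=1}^{L} m_i K_i^2\bigr)^{1/2}$.

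\textbf{Main obstacle.} The subtle point lies in Step 1: the factor $3\gamma^{(j)}/\epsilon$ per BN layer supplied by Lemma \ref{lam:phi-back-prop} makes the gradient at lower layers grow as a \emph{product} of downstream $\gamma$'s, which in isolation would overwhelm the merely linear contraction from weight decay. The reverse induction in Step 2 succeeds precisely because, once the upper layers have been bounded, $\prod_{j>i}K_j$ is a (possibly huge but) finite constant, so the one-dimensional contraction recursion closes at each layer. A minor secondary nuisance is the side condition $\etag\lambda\le 1$, which may either be imposed directly on $\etag$ or arranged through the eventual a-posteriori smoothness constants, which can be taken to depend on the bound $K$ produced here.
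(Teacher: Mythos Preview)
Your proposal is correct and follows essentially the same route as the paper's own proof: both combine the per-layer backpropagation bound of Lemma~\ref{lam:phi-back-prop} (which shows the gradient flowing into layer $i$ picks up a factor $\gamma^{(i)}/\epsilon$ but no $\beta$-dependence) with the one-dimensional contraction recursion $|g_{t+1}|\le(1-\etagt{t}\lambda)|g_t|+\etagt{t}|\partial\LossF/\partial g|$ from weight decay, and close the argument by reverse induction on the layer index from $L$ down to $1$. Your write-up is in fact more explicit than the paper's about the product form $\prod_{j>i}\|\gamma^{(j)}\|_\infty$ and about the side condition $\etag\lambda\le 1$, both of which the paper leaves implicit; the only superfluous ingredient is the operator-norm bound $\|W^{(j)}\|_{\mathrm{op}}\le\sqrt{m_j}$, since Lemma~\ref{lam:phi-back-prop} already absorbs the linear layer into the BN and produces a bound independent of $\|\vw\|_2$.
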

\begin{proof}
Fix a time $t$ in the training process. Consider the process of back propagation. Define
\[
R_i = \sum_{b = 1}^{B} \sum_{k = 1}^{m_i} \left(\frac{\partial}{\partial x^{(i)}_{b,k}} \LossF_{\vz}(\theta)\right)^2,
\]
where $x^{(i)}_{b,k}$ is the output of the $k$-th neuron in the $i$-th layer in the $b$-th data sample in the batch. By the Lipschitzness of the objective, $R_L$ can be bounded by a constant. If $R_i$ can be bounded by a constant, then by the Lipschitzness of $\sigma$ and Lemma \ref{lam:phi-back-prop}, the gradient of $\gamma$ and $\beta$ in layer $i$ can also be bounded by a constant. Note that
\[
\evg_{t+1,k} = \evg_{t,k} - \etagt{t} \frac{\partial}{\partial \evg_{t,k}} \LossF_{\vz}(\vtheta_t) - \lambda \etagt{t}\evg_{t,k}
\]
Thus $\gamma$ and $\beta$ in layer $i$ can be bounded by a constant since
\[
\lvert \evg_{t+1,k} \rvert \le (1 - \etagt{t}\lambda) \lvert \evg_{t,k} \rvert + \etagt{t}\lambda \cdot \frac{1}{\lambda} \left\lvert \frac{\partial}{\partial \evg_{t,k}} \LossF_{\vz}(\vtheta_t) \right\rvert.
\]
Also Lemma \ref{lam:phi-back-prop} and the Lipschitzness of $\sigma$ imply that $R_{i-1}$ can be bounded if $R_i$ and $\gamma$ in the layer $i$ can be bounded by a constant. Using a simple induction, we can prove the existence of $K$ for bounding the norm of $\|\vg_t\|_2$ for all time $t$.
\end{proof}

\begin{theorem}
If $\|\vg_0\|_2$ is bounded by a constant, then $\Phi$ satisfies the assumptions in Section \ref{subsec:assumptions}.
\end{theorem}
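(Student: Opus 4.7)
The plan is to verify each of the four assumptions in Section \ref{subsec:assumptions} in turn, leveraging the boundedness of $\vg_t$ established in Lemma \ref{lam:phi-g-bound} to reduce everything to compactness arguments.

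First, I would verify twice continuous differentiability of $\LossF_{\vz}(W;\vg)$ on the open set where every $\wui{i}\ne 0$. The smoothed BN in \eqref{eq:bn-smooth} has denominator $\|\vw_k\|_{\mS+\epsilon\mI}\ge \sqrt{\epsilon}\,\|\vw_k\|_2>0$, so the normalization is $C^2$ away from $\vw_k=0$; composing with the $C^2$ activation $\sigma$ and the $C^2$ objective $f_y$ preserves this. The lower bound on $\Loss(\vtheta)$ is immediate: $f_y\ge f_{\min}$ and the weight-decay term $\frac{\lambda}{2}\|\vg\|_2^2$ is nonnegative, so $\Loss_{\min}:=f_{\min}$ works.

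Next, for the smoothness bounds, observe that in the intrinsic parameterization the relevant domain is $V\in\mathcal{S}:=\prod_i\{\vui{i}:\|\vui{i}\|_2=1\}$, which is already compact. By Lemma \ref{lam:phi-g-bound}, there is a constant $K$ with $\|\vg_t\|_2\le K$ along the trajectory, so it suffices to prove the Hessian bounds on the compact set $\mathcal{K}:=\mathcal{S}\times\{\vg:\|\vg\|_2\le K\}$. Since $\LossF_{\vz}(V;\vg)$ is $C^2$ on an open neighborhood of $\mathcal{K}$ (all $\vui{i}$ are unit vectors, hence nonzero), each of the block Hessian norms $\|\partial_{\vui{i}}\partial_{\vui{j}}\LossF_{\vz}\|_2$, $\|\partial_{\vui{i}}\partial_{\vg}\LossF_{\vz}\|_2$, $\|\nabla_{\vg}^2\LossF_{\vz}\|_2$ is a continuous function on $\mathcal{K}$ and therefore attains a maximum by the Extreme Value Theorem. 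Taking the maxima (over $\vz$ in the finite set of minibatches, then over $\mathcal{K}$) defines the constants $\Lvv{ij}$, $\Lvg{i}$, $\Lgg$. The same compactness gives a uniform bound on $\|\nabla_{\vg}\LossF_{\vz}(V;\vg)\|_2$, from which the variance bound $\Gg^2$ follows by the elementary inequality $\Var(X)\le \E[\|X\|_2^2]$.

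The one step requiring slight care is making sure the smoothness argument is applied in the correct coordinates: the assumptions are stated in the $(V;\vg)$ coordinates, not $(W;\vg)$, so the blow-up of the original Hessian as $\|\wui{i}\|_2\to\infty$ or $\to 0$ is not a problem. The scale-invariance of $\LossF_{\vz}$ in each $\wui{i}$ means that $\LossF_{\vz}(V;\vg)$ depends on $V$ only through unit-norm representatives, so restricting to $\mathcal{S}$ loses nothing. The main obstacle, already handled by Lemma \ref{lam:phi-g-bound}, is controlling $\|\vg_t\|_2$; the rest is a clean application of continuity plus compactness, so once the domain $\mathcal{K}$ is identified the theorem follows directly.
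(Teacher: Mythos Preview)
Your proposal is correct and follows essentially the same route as the paper: invoke Lemma~\ref{lam:phi-g-bound} to confine $\vg$ to a ball, form the compact set $\mathcal{K}=\prod_i\{\|\vui{i}\|_2=1\}\times\{\|\vg\|_2\le K\}$, and apply the Extreme Value Theorem to the $C^2$ loss to obtain $\Lvv{ij},\Lvg{i},\Lgg,\Gg$. The paper's proof is terser and adds only the remark that $\mathcal{K}$ also contains the (normalized) intermediate points on each segment $[\vtheta_t,\vtheta_{t+1}]$, which is where the Hessian bounds are actually applied in the Taylor expansion; you may want to note this explicitly, though it follows easily from convexity of the $\vg$-ball and the orthogonality in Lemma~\ref{lam:w-growth}.
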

\begin{proof}

Let $C$ be the set of parameters $\vtheta$ satisfying $\|\vg\| \le K$ and $\|\wui{i}\|_2 = 1$ for all $1 \le i \le m$. By Lemma \ref{lam:phi-g-bound}, $C$ contains the set of $\tilde{\vtheta}$ associated with the points lying between each pair of $\vtheta_t$ and $\vtheta_{t+1}$ (including the endpoints).

It is easy to show that $\LossF_{\vz}(\tilde{\vtheta})$ is twice continously differentiable. Since $C$ is compact, by the Extreme Value Theorem, there must exist such constants $\Lvv{ij}, \Lvg{i}, \Lgg, \Gg$ for upper bounding the smoothness and the difference of gradients.
\end{proof}

\section{Experiments}\label{sec:appendix-exper}

In this section, we provide experimental evidence showing that the auto rate-tuning behavior does empower BN in the optimization aspect.

We trained a modified version of VGGNet~\citep{simonyan2014VGG} on Tensorflow. This network has $2 \times \mathrm{conv64}$, pooling, $3 \times \mathrm{conv128}$, pooling, $3 \times \mathrm{conv256}$, pooling, $3 \times \mathrm{conv512}$,
pooling, $3 \times \mathrm{conv512}$, pooling, $\mathrm{fc512}$, $\mathrm{fc10}$ layers in order. Each convolutional layer has kernel size $3 \times 3$ and stride $1$. $\relu$ is used as the activation function after each convolutional or fully-connected layer. We add a BN layer right before each $\relu$. We set $\epsilon = 0$ in each BN, since we observed that the network works equally well for $\epsilon$ being $0$ or an small number (such as $10^{-3}$, the default value in Tensorflow). We initialize the parameters according to the default configuration in Tensorflow: all the weights are initialized by Glorot uniform initializer~\citep{glorot10a}; $\beta$ and $\gamma$ in BN are initialized by $0$ and $1$, respectively.

In this network, every kernel is scale-invariant, and for every BN layer except the last one, the concatenation of all $\beta$ and $\gamma$ parameters in this BN is also scale-invariant. Only $\beta$ and $\gamma$ parameters in the last BN are scale-variant (See Section~\ref{subsec:motivating}). We consider the training in following two settings:
\begin{enumerate}
    \item Train the network using the standard SGD(No momentum, learning rate decay ,weight decay and dropout);
    \item Train the network using Projected SGD (PSGD): at each iteration, one first takes a step proportional to the negative of the gradient calculated in a random batch, and then projects each scale-invariant parameter to the sphere with radius equal to its $2$-norm before this iteration, i.e., rescales each scale-invariant parameter so that each maintains its length during training.
\end{enumerate}
Note that the projection in Setting $2$ removes the adaptivity of the learning rates in the corresponding intrinsic optimization problem, i.e., $G^{(i)}_t$ in \eqref{eq:intrinsic-adaptive-vt} remains constant during the training. Thus, by comparing Setting 1 and Setting 2, we can know whether or not the auto-tuning behavior of BN shown in theory is effective in practice.

\begin{figure}[!htbp] 
    \centering
    \includegraphics[width=0.49\linewidth]{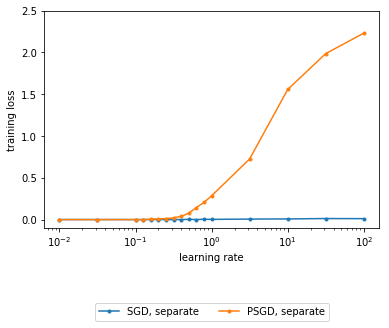}
    \includegraphics[width=0.49\linewidth]{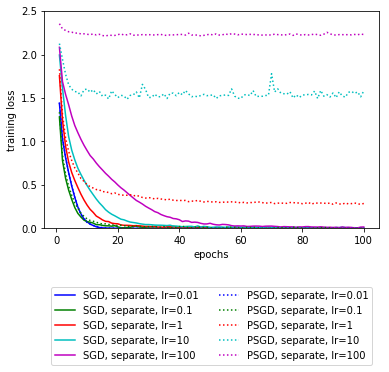}
    \caption{The relationship between the training loss and the learning rate for scale-invariant parameters, with learning rate for scale-variant ones set to $0.1$. \textbf{Left:} The average training loss of the last $5$ epochs. \textbf{Right:} The average training loss of each epoch.}
    \label{fig:loss-vs-lr-inv}
\end{figure}

\begin{figure}[!htbp] 
    \centering
    \includegraphics[width=0.49\linewidth]{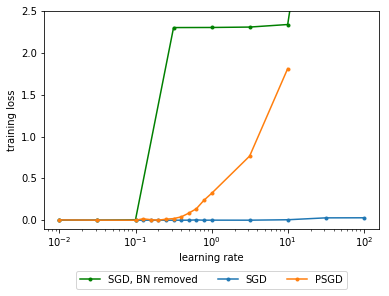}
    \includegraphics[width=0.49\linewidth]{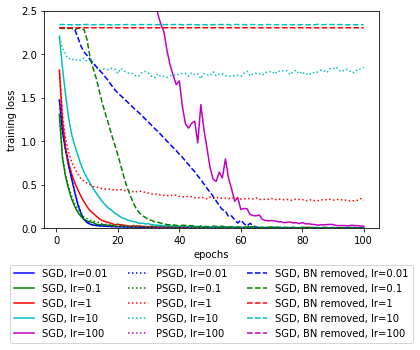}
    \caption{The relationship between the training loss and the learning rate. For learning rate larger than $10$, the training loss of PSGD or SGD with BN removed is either very large or NaN, and thus not invisible in the figure. \textbf{Left:} The average training loss of the last $5$ epochs. \textbf{Right:} The average training loss of each epoch.}
    \label{fig:loss-vs-lr}
\end{figure}

\begin{figure}[!htbp] 
    \centering
    \includegraphics[width=0.49\linewidth]{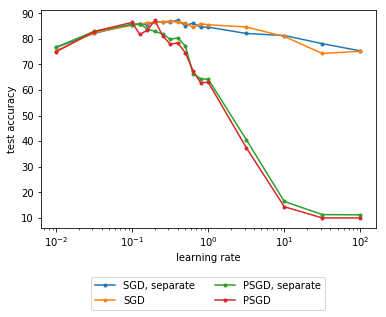}
    \includegraphics[width=0.49\linewidth]{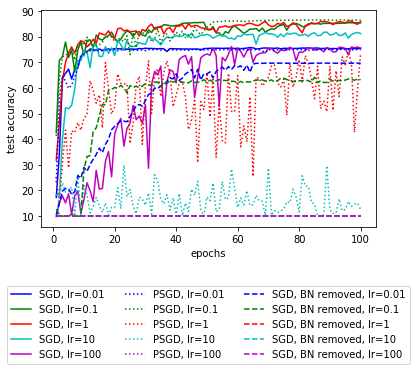}
    \caption{The relationship between the test accuracy and the learning rate. \textbf{Left:} The average test accuracy of the last $5$ epochs. \textbf{Right:} The test accuracy after each epoch. Due to the implementation of Tensorflow, outputing NaN leads to a test accuracy of $10\%$. Note that the magenta dotted line(\texttt{PSGD, lr=100}), red dashed line(\texttt{SGD, BN removed, lr=1}) and  cyan dashed line(\texttt{SGD, BN removed, lr=10}) are covered by the magenta dashed line(\texttt{SGD, BN removed, lr=100}). They all have $10\%$ test accuracy.}
    \label{fig:val-acc-vs-lr}
\end{figure}

\subsection{Separate Learning Rates}

As in our theoretical analysis, we consider what will happen if we set two learning rates separately for scale-invariant and scale-variant parameters. We train the network in either setting with different learning rates ranging from $10^{-2}$ to $10^2$ for $100$ epochs. 

First, we fix the learning rate for scale-variant ones to $0.1$, and try different learning rates for scale-invariant ones. As shown in Figure~\ref{fig:loss-vs-lr-inv}, for small learning rates (such as $0.1$), the training processes of networks in Setting 1 and 2 are very similar. But for larger learning rates, networks in Setting 1 can still converge to $0$ for all the learning rates we tried, while networks in Setting 2 got stuck with relatively large training loss. This suggests that the auto-tuning behavior of BN does takes effect when the learning rate is large, and it matches with the claimed effect of BN in~\cite{ioffe2015batch} that BN enables us to use a higher learning rate. Though our theoretical analysis cannot be directly applied to the network we trained due to the non-smoothness of the loss function, the experiment results match with what we expect in our analysis.

\subsection{Unified Learning Rate}

Next, we consider the case in which we train the network with a unified learning rate for both scale-invariant and scale-variant parameters. We also compare Setting 1 and 2 with the setting in which we train the network with all the BN layers removed using SGD (we call it Setting 3).

As shown in Figure~\ref{fig:loss-vs-lr}, the training loss of networks in Setting 1 converges to $0$. On the contrast, the training loss of networks in Setting 2 and 3 fails to converge to $0$ when a large learning rate is used, and in some cases the loss diverges to infinity or NaN. This suggests that the auto-tuning behavior of BN has an effective role in the case that a unified learning rate is set for all parameters.

\subsection{Generalization}
Despite in Setting 1 the convergence of training loss for different learning rates, the convergence points can be different, which lead to different performances on test data.

In Figure~\ref{fig:val-acc-vs-lr}, we plot the test accuracy of networks trained in Setting 1 and 2 using different unified learning rates, or separate learning rates with the learning rate for scale-variant parameters fixed to $0.1$. As shown in the Figure~\ref{fig:val-acc-vs-lr},  the test accuracy of networks in Setting 2  decreases as the learning rate increases over $0.1$, while the test accuracy of networks in Setting 1 remains higher  than $75\%$.  The main reason that the network in Setting 2 doesn't perform well is  underfitting, i.e. the network in Setting 2  fails to fit the training data well when learning rate is large. This suggests that the auto-tuning behavior of BN also benefits generalization since such behavior allows the algorithm to pick learning rates from a wider range while still converging to small test error.



\end{document}